\def\BibTeX{{\rm B\kern-.05em{\sc i\kern-.025em b}\kern-.08em
    T\kern-.1667em\lower.7ex\hbox{E}\kern-.125emX}}
\newtheorem{prop}{Proposition}[section]
\newtheorem{lem}{Lemma}[section]
\theoremstyle{definition}
\theoremstyle{definition}
\DeclareMathOperator*{\argmax}{arg\,max}
\newtheorem{theorem}{Theorem}[section]
\begin{document}

\title{Random Projections for\\Improved Adversarial Robustness}

%%%%% AUTHORS %%%%%
\author{
\IEEEauthorblockN{1\textsuperscript{st} Ginevra Carbone}
\IEEEauthorblockA{\textit{Dept. of Mathematics and Geosciences} \\
\textit{University of Trieste }\\
Trieste, Italy\\
ginevra.carbone@phd.units.it}\\
\and
\IEEEauthorblockN{2\textsuperscript{rd} Guido Sanguinetti}
\IEEEauthorblockA{\textit{School of Informatics} \\
\textit{University of Edinburgh}\\
Edinburgh, United Kingdom \\
%\textit{\gc{dept?}} \\
\textit{SISSA}\\
Trieste, Italy \\
gsanguin@sissa.it}
\and
\IEEEauthorblockN{3\textsuperscript{nd} Luca Bortolussi}
\IEEEauthorblockA{\textit{Dept. of Mathematics and Geosciences} \\
\textit{University of Trieste}\\
Trieste, Italy \\
\textit{Modeling and Simulation Group} \\
\textit{Saarland University}\\
Saarland, Germany \\
luca.bortolussi@gmail.comne}
}
%%%%%%%%%%%%%%%%%%%%%%
\newcommand{\lb}[1]{{\color{red}[Luca: #1]}}
\newcommand{\gs}[1]{\color{red}[Guido: #1]}
\newcommand{\gc}[1]{{\color{blue}[Gin: #1]}}

\maketitle

\begin{abstract}

We propose two training techniques for improving the robustness of Neural Networks to adversarial attacks, i.e. manipulations of the inputs that are maliciously crafted to fool networks into incorrect predictions. Both methods are independent of the chosen attack and leverage random projections of the original inputs, with the purpose of exploiting both dimensionality reduction and some characteristic geometrical properties of adversarial perturbations. The first technique is called \textit{RP-Ensemble} and consists of an ensemble of networks trained on multiple projected versions of the original inputs. The second one, named \textit{RP-Regularizer}, adds instead a regularization term to the training objective.
\end{abstract}

\begin{IEEEkeywords}
Adversarial robustness, Randomization, Regularization, Computational efficiency
\end{IEEEkeywords}

\section{Introduction}

Adversarial examples  \cite{szegedy2014intriguing} are small perturbations of the input data, specifically designed to induce wrong predictions in machine learning models, even for those achieving exceptional accuracy and with a high confidence in the wrong predictions. Such perturbations are often not even recognizable by humans \cite{kurakin2017physical, carlini2019onevaluating}, thus developing suitable defense strategies is crucial in security-critical settings, and especially in computer vision algorithms (e.g. road signs recognition, medical imaging or autonomous driving~\cite{eykholt17robust}).

Defense research is currently focusing on different strategies for preventing this kind of vulnerability: deriving exact robustness bounds under some theoretical constraints~\cite{fawzi2018analysis}, analyzing the robustness to particularly strong attacks~\cite{carlini2019onevaluating}, designing defences that are specific for the chosen attack~\cite{goodfellow2015explaining}, or developing general training algorithms and regularization techniques which improve resilience to multiple attacks.

Random projections of the input samples into lower-dimensional spaces have been extensively used for dimensionality reduction purposes, but in this work we are mostly interested in using them for providing robustness and regularization guarantees against the adversaries. Our main inspiration for this work is the \emph{Manifold Hypothesis}~\cite{genovese2012manifold, dasgupta2008random, donoho2003hessian}, which models data as being sampled from low-dimensional manifolds, corresponding to the classification regions, embedded in a high-dimensional space \cite{khuory18geometry}. Therefore, decision boundaries are represented as hypersurfaces of the embedding space. Such approach allows to face the problem of high-dimensionality of the input space, since the number of samples required for learning grows exponentially with the dimension of the space. Geometrical inspections related to this phenomenon lead us to the idea of using random projections of the input data as a defense. We observe that projected versions of the original data are easier to learn and lie in less complex regions of the space. 

We propose a training technique, called \emph{RP-Ensemble} (\ref{sec:rp_ensemble}), which improves the robustness to adversarial examples of a pre-trained classifier. This method projects the input data in multiple lower dimensional spaces, each one determined by a random selection of directions in the space. Then, it trains a new classifier in each subspace, using the corresponding projected version of the data. Finally, it performs an ensemble classification on the original high dimensional data. In Sec. \ref{sec:rp_regularizer} we also define a regularization term for the training objective, named \emph{RP-Regularizer}. 
%, which is based on the computation of loss gradients over input data projections.
This technique combines the norm of the loss gradients, intended as a measure of vulnerability, and the expectation over random projections of the inputs. In doing so, we aim at exploiting relevant adversarial features during training.

We evaluate the adversarial vulnerability of the resulting trained models and compare them to adversarially trained robust models (Sec. \ref{sec:experiments}).
Finally, we discuss the 
%computational efficiency of both methods and, in particular, the
scalability and parallelizability of RP-Ensemble. %\footnote{Code is available at \url{https://github.com/ginevracoal/random-projections}}.

\section{Methodology}

In the next sections we will refer to $d$-dimensional data samples and to neural network models of type $f(\cdot,\theta):\mathbb{R}^d\rightarrow\mathbb{R}^K$, with learnable weights $\theta$, solving a classification problem on $K$ classes. 

\subsection{RP-Ensemble}
\label{sec:rp_ensemble}

%The fundamental idea behind RP-Ensemble is that single projections act in different regions of the input space and typically shadow part of the original knowledge, while the ensemble structure allows to reconstruct the general overview on the data by aggregating the single clues.

%RP-Ensemble model projects the input data in multiple lower dimensional spaces, each one determined by random selection of directions in the space, then it trains a classifier in each subspace, using the corresponding projected version of the data. Finally, it classifies the original data using an ensemble method.

RP-Ensemble method is built upon a pre-trained model and can be regarded as a fine tuning technique for adversarial robustness.  Let $X\in\mathbb{R}^{n\times d}$ be the $n$ original $d$-dimensional training examples, represented in matrix form, and let $g(\cdot,\theta):\mathbb{R}^d\rightarrow\mathbb{R}^K$ be the pre-trained network. 

First, we project the whole dataset into $p$ different subspaces of dimension $k\leq d$, using Gaussian random projection matrices \cite{johnson-lind}. Each projection matrix $R_j\in$ $\mathbb{R}^{k\times d}$ maps the input data $X$ into its $k$-dimensional projected version $\mathcal{P}_j(X) = X R_j^T \in\mathbb{R}^{n\times k}$, using $k$ random directions.

The elements of each random matrix $R_j$ are independently drawn from a $\mathcal{N}(0,1/k)$ distribution. This particular choice is motivated by Johnson-Lindenstrauss Lemma (\ref{joh_lin_lemma}), ensuring that the Euclidean distance between any two points in the new low-dimensional space is approximately very close to the distance between the same points in the original high-dimensional space \cite{johnson-lind}.

\begin{lem} [Johnson-Lindenstrauss Lemma]
Given a set of n points $\mathcal{M}\subset\mathbb{R}^d$, let $\epsilon\in(0,1/2)$, $k>8\log n/\epsilon^2$ and $A\in\mathbb{R}^{k\times d}$ be a matrix whose entries have been sampled independently from  $\mathcal{N}(0,1/k)$. 

Then, for any couple of points $u,v\in\mathcal{M}$ the following inequality holds
\begin{align*}
P\Big[(1-\epsilon)||u-v||^2
&\leq\Big|\Big|Au-Av\Big|\Big|^2\\
&\leq(1+\epsilon)||u-v||^2\Big]\geq 1-2e^{-(\epsilon^2-\epsilon^3)k/4}.
\end{align*}
\label{joh_lin_lemma}
\end{lem}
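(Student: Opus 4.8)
The plan is to prove the Johnson-Lindenstrauss Lemma by reducing it to a concentration bound on the squared norm of a single random Gaussian projection, then using a union bound over the two tail events.

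First I would normalize the problem. Fix a pair of points $u,v\in\mathcal{M}$ and set $w=u-v\in\mathbb{R}^d$. By linearity of $A$ we have $Au-Av=Aw$, so the claim is equivalent to showing
\begin{align*}
P\Big[(1-\epsilon)\|w\|^2\leq\|Aw\|^2\leq(1+\epsilon)\|w\|^2\Big]\geq 1-2e^{-(\epsilon^2-\epsilon^3)k/4}.
\end{align*}
Since the bound is scale-invariant in $w$, I may assume $\|w\|=1$. Writing $A$ in terms of its rows $a_1,\dots,a_k$, each $a_i$ is a vector of i.i.d.\ $\mathcal{N}(0,1/k)$ entries, so $a_i\cdot w$ is a mean-zero Gaussian with variance $\|w\|^2/k=1/k$. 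Hence $Z_i:=\sqrt{k}\,(a_i\cdot w)\sim\mathcal{N}(0,1)$ are i.i.d.\ standard normals, and $\|Aw\|^2=\frac1k\sum_{i=1}^k Z_i^2$. The target becomes a two-sided tail bound for a chi-squared random variable $Q:=\sum_{i=1}^k Z_i^2$ with $k$ degrees of freedom, namely $P\big[(1-\epsilon)k\leq Q\leq(1+\epsilon)k\big]\geq 1-2e^{-(\epsilon^2-\epsilon^3)k/4}$.

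The key step is the chi-squared concentration, which I would obtain via the Chernoff/moment-generating-function method. For the upper tail, for any $t\in(0,1/2)$ one has $P[Q\geq(1+\epsilon)k]\leq e^{-t(1+\epsilon)k}\,\mathbb{E}[e^{tQ}]=e^{-t(1+\epsilon)k}(1-2t)^{-k/2}$, using $\mathbb{E}[e^{tZ_i^2}]=(1-2t)^{-1/2}$ and independence. Optimizing over $t$ (the optimal choice is $t=\tfrac{\epsilon}{2(1+\epsilon)}$) and then applying the standard inequality $\log(1+\epsilon)\geq\epsilon-\epsilon^2/2$ yields the clean exponent, giving $P[Q\geq(1+\epsilon)k]\leq e^{-(\epsilon^2-\epsilon^3)k/4}$. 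The lower tail $P[Q\leq(1-\epsilon)k]$ is handled symmetrically with a negative parameter $t$ (i.e.\ bounding $\mathbb{E}[e^{-tQ}]$), and using $\log(1-\epsilon)\geq-\epsilon-\epsilon^2/2$ one arrives at the same exponent $e^{-(\epsilon^2-\epsilon^3)k/4}$. Finally I would combine the two one-sided bounds by a union bound, so the probability that $Q$ leaves the interval $[(1-\epsilon)k,(1+\epsilon)k]$ is at most $2e^{-(\epsilon^2-\epsilon^3)k/4}$, which after complementation is exactly the stated inequality.

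The main obstacle is the tail-bound bookkeeping: getting both the upper and lower deviations to collapse to the \emph{same} exponent $(\epsilon^2-\epsilon^3)k/4$ requires choosing the Chernoff parameter correctly and then invoking the right logarithmic inequalities, where the asymmetry between $\log(1+\epsilon)$ and $\log(1-\epsilon)$ must be absorbed carefully so that neither side produces a worse constant. Everything else—linearity, the Gaussian-to-chi-squared reduction, and the union bound—is routine. I note that the hypotheses $\epsilon\in(0,1/2)$ and $k>8\log n/\epsilon^2$ are not actually needed for this per-pair statement; they are exactly what is required to push the bound through a union bound over all $\binom{n}{2}$ pairs simultaneously (guaranteeing the distortion holds for the whole set $\mathcal{M}$ with positive probability), which motivates their appearance in the lemma even though the displayed inequality is stated for a single fixed couple $u,v$.
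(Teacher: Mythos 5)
The paper never proves this lemma: it is quoted as a known result with a citation to the literature, so there is no in-paper argument to compare yours against. Your proposal is the standard (Dasgupta--Gupta style) proof and it is essentially correct: reduce to the single difference vector $w=u-v$, normalize $\|w\|=1$, observe that $\|Aw\|^2=\frac{1}{k}\sum_{i=1}^k Z_i^2$ with $Z_i$ i.i.d.\ standard normals, bound each chi-squared tail by the Chernoff/MGF method with the optimal parameter (your $t=\epsilon/(2(1+\epsilon))$ is right for the upper tail), and finish with a union bound over the two tails. Your closing remark is also correct: the hypotheses $\epsilon\in(0,1/2)$ and $k>8\log n/\epsilon^2$ play no role in the displayed per-pair inequality and only matter for a union bound over all $\binom{n}{2}$ pairs. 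One slip you should fix: the logarithmic inequalities you invoke point the wrong way. The upper-tail Chernoff bound collapses to $\exp\big(\tfrac{k}{2}(\log(1+\epsilon)-\epsilon)\big)$, so what you need is the \emph{upper} bound $\log(1+\epsilon)\le \epsilon-\epsilon^2/2+\epsilon^3/3$, not the lower bound $\log(1+\epsilon)\ge\epsilon-\epsilon^2/2$ that you cite; similarly the lower tail needs $\log(1-\epsilon)\le-\epsilon-\epsilon^2/2$, and the inequality you wrote, $\log(1-\epsilon)\ge-\epsilon-\epsilon^2/2$, is actually false (the Taylor series of $\log(1-\epsilon)$ has all negative terms). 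With the directions corrected, both tails are bounded by $e^{-(\epsilon^2-\epsilon^3)k/4}$ and your argument goes through exactly as planned.
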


Notice that any two independently randomly chosen vectors in a high dimensional space are almost orthogonal with probability close to one and nearly have the same length \cite{kaski98}. Consequently, for any given number of sample points $n$, the $k$-dimensional columns of a random matrix $R_j$ generated from a Gaussian distribution are almost orthogonal to each other. This procedure yields to a projection in the subspace generated by the columns of $R_j$.%, without using any orthogonalization procedure.

Next, we train a classifier $\psi_j(\cdot,\theta_j):\mathbb{R}^{k}\rightarrow\mathbb{R}^K$ in each projected subspace on the corresponding projected version of the data $\mathcal{P}_j(X)$. The architecture of the $\psi_j$-s mirrors that of the pre-trained network $g$, except for the first layer, which is adapted to the size of the projected lower dimensional input. Notice that the $\psi_j$-s do not share their weights nor the inputs, thus the backpropagation algorithm during the training phase stops at the projected data $\mathcal{P}_j(X)$.

Finally, we perform an ensemble classification on the original high dimensional data, by summing up the probability distributions from all the projected classifiers together with the predictions from the pre-trained classifier $g$.
Let $p_{\theta_j}, p_g$ be the probability mass functions for the classifiers $\psi_j$ and $g$. The classification of an input sample $x_i\in\mathcal{X}$ is given by

$$ 
y_i:=\argmax_{y=1,\ldots,K}\Bigg(\sum_{j=1,\ldots,p} p_{\theta_j}(y|\mathcal{P}_j(x_i)) + p_g(y|x_i)\Bigg),
$$

for each $i=1,\ldots,n$.

\subsection{RP-Regularizer}
\label{sec:rp_regularizer}

RP-Regularizer is a variant of Total Variation regularization, a well known denoising approach in image processing \cite{rudin1992totalvariation}, already used in \cite{finlay19improved} as a regularization term for improving adversarial robustness. Its computation for a network $f(\cdot,\theta)$ is made tractable by a numerical approximation on the labeled dataset $\mathcal{D}=\{(x_i,y_i)\}_{i=1,\ldots,n}$, e.g.
$$
\big|\big|\nabla_x f\big|\big|_{L_1}\approx \frac{1}{n}\sum_{i=1}^n \big|\nabla_x \ell(f(x_i, \theta),y_i)\big|,
$$
which is less complex w.r.t. the computation of the full gradient $\nabla_x f$.
Here $\ell$ is the training loss function and $n$ is the number of training samples.
Our regularization term, instead, is computed in the $L_2$-norm on suitable random projections of the input points.

The first step consists in sampling the components of the random matrices $R_j\in\mathbb{R}^{k_j\times d}$ from a Gaussian distribution, as was done in section \ref{sec:rp_ensemble} for RP-Ensemble, but with a randomly chosen projection size $k_j=1,\ldots,d$. Then, we project the input data matrix $X$ in a $k$-dimensional subspace
$$
\mathcal{P}_j(X)= X R_j^T \in \mathbb{R}^{n\times k_j},
$$
$R_j$ being the  $j$-th projection matrix, for all the projection indexes $j=1,\ldots,p$.

Since the penalty for the objective needs to depend on the network's weights at the current training step, we want to map the projections $\mathcal{P}_j(X)$ back into the original $d$-dimensional space. We do this by means of \emph{Moore-Penrose pseudo-inverse}~\cite{penrose_1955} $R^\dagger_j$ of $R_j$ and apply it to the projected points $$\mathcal{P}^\dagger_j(\mathcal{P}_j(X))= X R_j^T (R_j^\dagger)^T \in \mathbb{R}^{n\times d}.$$

Fig. \ref{fig:invproj} shows an example of this procedure on the MNIST dataset~\cite{mnist}. In a nutshell, it builds on the two projection operators 
\begin{align*}
	\mathcal{P}: \,\mathbb{R}^{n\times d}\longrightarrow \prod_{j=1}^p\mathbb{R}^{n\times k_j}\\
	\mathcal{P}^\dagger: \prod_{j=1}^p\mathbb{R}^{n\times k_j}\longrightarrow \prod_{j=1}^p\mathbb{R}^{n\times d}.
\end{align*}

The pseudo-inverse is a generalized inverse matrix. It exists and is unique for any given real rectangular matrix and the resulting composition $R_j^T (R_j^\dagger)^T$ is an orthogonal projection operator on $\mathbb{R}^d$.

\begin{figure}[!htbp]
	\center
	\includegraphics[width=0.85\linewidth]{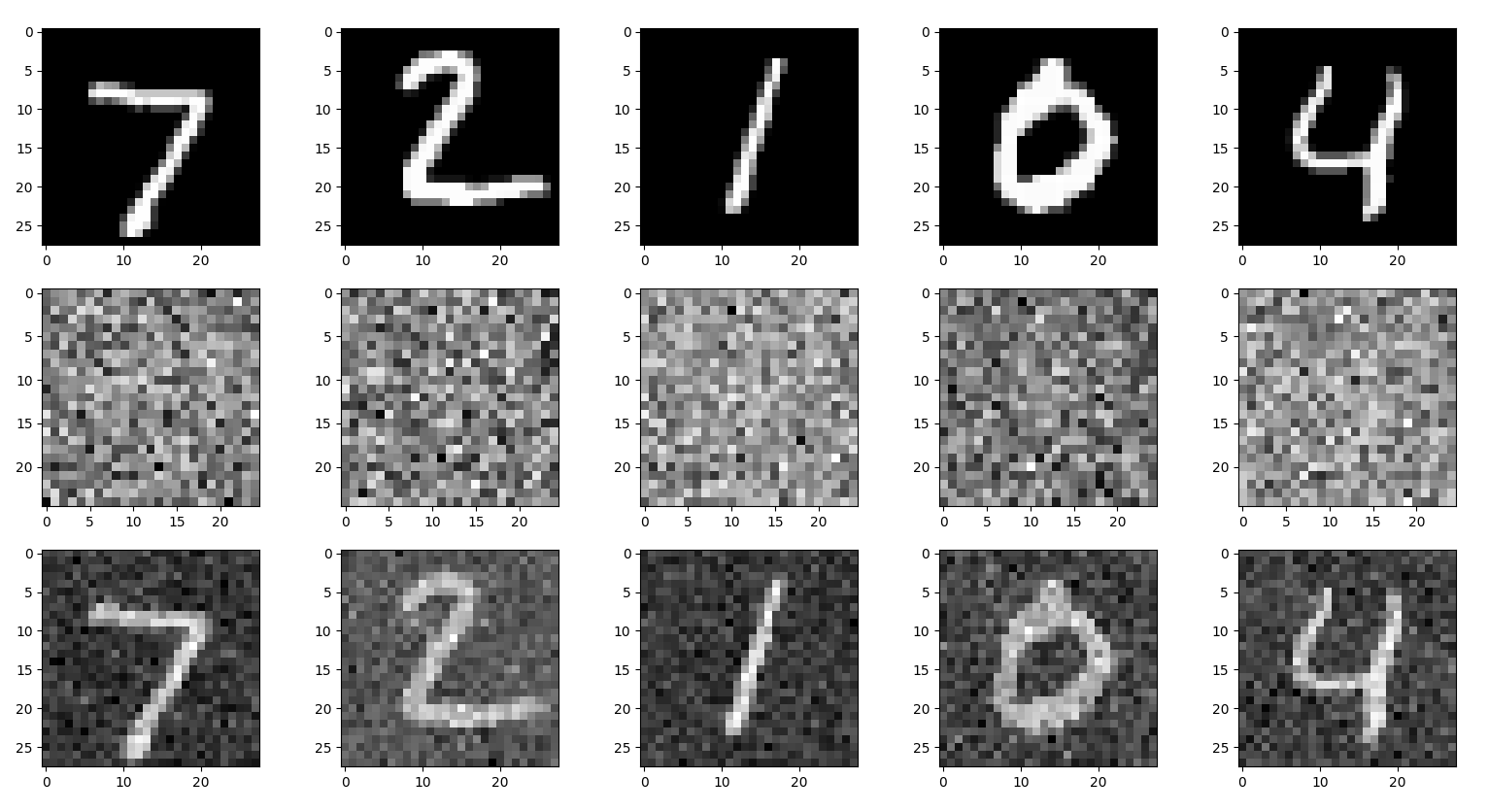}
	\caption{First row shows the original image, second row its projected version $\mathcal{P}_j(x)$, third row the inverse projection $\mathcal{P}^\dagger_j\mathcal{P}_j(x)$. In this example we computed $25\times25$ projections on $28\times28$ MNIST samples.}
	\label{fig:invproj}
\end{figure}

Let $\ell$ be the training loss function.
We propose two possible formulations for the regularization term $\mathcal{R}(\theta)$ of the objective $J(\theta)=\ell(\theta) + \lambda \,\mathcal{R}(\theta)$ on a set of weights $\theta$. The first one, namely $\mathcal{R}_\text{v1}$, adds a penalty which is proportional to the expected norm of the loss gradients computed on the projected data

\begin{align}
 	\mathcal{R}_{\text{v1}} 
 	&= \mathbb{E}_x\bigg[ \mathbb{E}_{\mathcal{P}}\Big[ \big|\big|\nabla_x \ell (f(\mathcal{P}^\dagger\mathcal{P}(x),\theta),y)\big|\big|^2_2 \Big] \bigg]\nonumber\\
	&\approx \frac{1}{np} \sum_{\substack{i=1,\ldots,n\\j=1,\ldots,p}} \Big|\Big| \nabla_x \ell(f(\mathcal{P}^\dagger_j\mathcal{P}_j(x_i),\theta),y_i) \Big|\Big|_2^2.
	\label{eq:v1}
\end{align}

The natural interpretation of $\mathcal{R}_{\text{v1}}$ is that it allows to minimize loss variation across the $\mathcal{P}^\dagger_j\mathcal{P}_j(x_i)$-s.

The second one, $\mathcal{R}_{\text{v2}}$, minimizes the variation of the loss gradients on the original inputs in randomly chosen projected subspaces

\begin{align}
 	\mathcal{R}_{\text{v2}} 
 	&= \mathbb{E}_x\bigg[ \mathbb{E}_{\mathcal{P}}\Big[ \big|\big| \mathcal{P}(\nabla_x \ell (f(x,\theta),y))\big|\big|^2_2 \Big] \bigg]\nonumber\\
	&\approx \frac{1}{np} \sum_{\substack{i=1,\ldots,n\\j=1,\ldots,p}} \Big|\Big| \mathcal{P}_j\Big( \nabla_x \ell(f(x_i,\theta),y_i)\Big) \Big|\Big|_2^2.
	\label{eq:v2}
\end{align}

At each training step we perform a finite approximation of the expectations on minibatches of data, by randomly sampling the directions, the dimension of the projected subspace and the number of projections.

The two regularization terms $\mathcal{R}_{\text{v1}}$ and $\mathcal{R}_{\text{v2}}$ are equivalent as $k\rightarrow \infty$.
\begin{theorem}
Let $\mathcal{R}_{v1}$ and $\mathcal{R}_{v2}$ be the regularization terms defined in Eq.~\ref{eq:v1} and Eq.~\ref{eq:v2}, where $\mathcal{P}:\mathbb{R}^d\rightarrow\mathbb{R}^k$ is a random projection such that the elements of the orthogonal random matrix $R$ are sampled from $\mathcal{N}(0,1/k)$. If $k\in O(d)$ then $\mathcal{R}_{v1}\approx\mathcal{R}_{v2}$ as $k\rightarrow\infty$.
\label{th:equivalence_regularizers}
\end{theorem}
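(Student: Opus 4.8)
The plan is to reduce both regularizers, after taking the expectation over the random matrix $R$, to a single quantity: the expected squared norm of the loss gradient at the original data, $\mathbb{E}_x\|\nabla_x\ell(f(x,\theta),y)\|_2^2$. Writing $g(x):=\nabla_x\ell(f(x,\theta),y)\in\mathbb{R}^d$ and recalling that in vector form $\mathcal{P}(x)=Rx$ and $\mathcal{P}^\dagger\mathcal{P}(x)=R^\dagger R\,x=:Px$, where $P$ is the orthogonal projector onto the $k$-dimensional row space of $R$, I would treat $\mathcal{R}_{v2}$ and $\mathcal{R}_{v1}$ separately and then match their limits.

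For $\mathcal{R}_{v2}$ the computation is essentially exact. Indeed $\|\mathcal{P}(g)\|_2^2=\|Rg\|_2^2=g^{T}R^{T}R\,g$, and since the entries of $R$ are i.i.d. $\mathcal{N}(0,1/k)$ one checks $\mathbb{E}_R[R^{T}R]=I_d$, so that $\mathbb{E}_R\,\mathcal{R}_{v2}=\mathbb{E}_x\|g(x)\|_2^2$. The finite-sample averages in Eq.~\ref{eq:v2} are then controlled by the Johnson-Lindenstrauss Lemma~\ref{joh_lin_lemma}, applied to each gradient vector $g(x_i)$: for $k$ large enough $\|Rg(x_i)\|_2^2$ concentrates around $\|g(x_i)\|_2^2$ with high probability, so the empirical mean over projections and samples stays close to $\tfrac1n\sum_i\|g(x_i)\|_2^2$.

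For $\mathcal{R}_{v1}$ I would Taylor-expand the gradient around the original point. Because $P$ projects onto a uniformly random $k$-dimensional subspace, $\mathbb{E}_R[P]=(k/d)I_d$, and the almost-orthonormality of the columns of $R$ noted after Lemma~\ref{joh_lin_lemma}, together with the near-isometry granted by JL, force $P$ to concentrate around the identity as $k\to d$; hence $\|(I-P)x\|_2^2$, whose expectation is $(1-k/d)\|x\|_2^2$, becomes negligible. Assuming the loss has a locally Lipschitz gradient (bounded Hessian), $\|g(Px)-g(x)\|_2$ is then controlled by $\|(I-P)x\|_2$, giving $\mathbb{E}_R\,\mathcal{R}_{v1}=\mathbb{E}_x\|g(Px)\|_2^2\approx\mathbb{E}_x\|g(x)\|_2^2$. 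Matching this with the $\mathcal{R}_{v2}$ limit from Eq.~\ref{eq:v2} yields $\mathcal{R}_{v1}\approx\mathcal{R}_{v2}$.

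The main obstacle is precisely this $\mathcal{R}_{v1}$ step, and it has two faces. First, transferring the input-level approximation $Px\approx x$ to the gradient-level approximation $g(Px)\approx g(x)$ needs a smoothness (bounded-Hessian) hypothesis on the loss that the statement does not make, so I would have to introduce it explicitly. Second, one must pin down the asymptotic regime: $\|(I-P)x\|_2$ only vanishes when the ratio $k/d\to1$, and the concentration of $R^{T}R$ about $I_d$ and of $RR^{T}$ about $(d/k)I_k$ — the latter being what legitimizes $R^\dagger\approx R^{T}$ and prevents the reconstruction Jacobian from leaving a spurious $k/d$ factor under either reading of the outer gradient — is exactly the large-$k$ Johnson-Lindenstrauss regime. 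I would therefore read ``$k\in O(d)$ with $k\to\infty$'' as $k=\Theta(d)$ with $k/d\to1$, and the bulk of the work is quantifying these two remainders and showing both are $o(1)$.
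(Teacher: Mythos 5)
Your proposal follows the same two-pronged skeleton as the paper---treat $\mathcal{R}_{v2}$ by averaging a quadratic form over $R$, and treat $\mathcal{R}_{v1}$ by showing the reconstruction $R^\dagger R x$ is close to $x$---but the key lemmas and the conclusions differ in instructive ways. The paper avoids your bounded-Hessian hypothesis by first restricting to $x\in(\ker R)^\perp$ (Prop.~\ref{th:equivalence_perp_kernel}), where $x^\dagger=R^\dagger Rx=x$ holds \emph{exactly}, and then extending to arbitrary $x$ via a separate concentration claim (Prop.~\ref{th:high_dim_ort_projection}), namely $P\big(\|\pi_R(x)-x\|_2^2>\epsilon\|x\|_2^2\big)\leq(1-\epsilon/\pi)^k$, proved by an angle argument; the continuity of $\nabla_x\ell$ needed to convert $x^\dagger\approx x$ into $\mathcal{T}_1(R)\approx\|\nabla_x\ell(f(x,\theta),y)\|_2^2$ is then used silently, so your explicit smoothness assumption makes honest what the paper leaves implicit. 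Beyond that, two of your observations actually cut against the paper's own argument. First, your moment computation $\mathbb{E}_R[R^TR]=I_d$ (which is the correct one for i.i.d.\ $\mathcal{N}(0,1/k)$ entries) yields ratio $1$ between the two expectations, whereas the paper computes $\mathbb{E}_R[\mathcal{T}_2]=\tfrac{d}{k}\,\mathcal{T}_1$---leaning on ``orthogonal columns'', even though a $k\times d$ matrix with $k<d$ cannot have $d$ mutually orthogonal nonzero columns---and consequently only concludes equivalence up to the constant $M=\lim d/k$. Second, your remark that $\|(I-P)x\|_2$ vanishes only when $k/d\to1$ pinpoints a real weakness: the paper's bound $(1-\epsilon/\pi)^k$ has no $d$-dependence, yet $\mathbb{E}\|(I-P)x\|_2^2=(1-k/d)\|x\|_2^2$, so the extension step cannot hold when $k/d$ stays bounded away from $1$; your reading of the hypothesis ($k=\Theta(d)$ with $k/d\to1$) is the regime in which the whole argument is actually valid, and in that regime even the paper's constant $M$ collapses to $1$. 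In short, your route is a corrected, quantitative version of the paper's: what the paper's route buys is brevity and the formal avoidance of smoothness assumptions; what yours buys is a sound second moment, an explicit statement of the hidden hypotheses, and the correct asymptotic regime.
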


We provide a formal proof of Theorem \ref{th:equivalence_regularizers} in Section \ref{sec:proof_regularizers} of the Appendix.

\section{Experimental Results}
\label{sec:experiments}

We evaluated the proposed methods  on image classification tasks with $10$ classes, using
MNIST~\cite{mnist} and CIFAR-10~\cite{cifar10} dataset. Our baseline models are Convolutional Neural Networks with ReLU activation functions. %Architectures and hyperparameters are reported in Table \gc{\ref{}}. 
We achieved $99.13\%$ accuracy on MNIST and  $76.52\%$  on CIFAR-10. 
The adversarial attacks in our tests are Fast Gradient Sign Method (FGSM)~\cite{goodfellow2015explaining}, Projected Gradient Descent (PGD)~\cite{kurakin2017physical}, DeepFool~\cite{moosavi2016deepfool}, and Carlini and Wagner (C\&W) in the $L_\infty$ norm~\cite{carliniwagner16towards}. The attacks just mentioned are described in Section \ref{sec:adv_attacks} of the Appendix.
In all such cases, the maximum distance between an image and its adversarial perturbation is set to $\epsilon=0.3$. These methods fall in the white-box category, i.e. they have complete knowledge of their target network. However, due to the transferability property of the attacks, they could also be effective on unknown models. 

Simulations were conducted on a machine with 34 single core Intel(R) Xeon(R) Gold 6140 CPU @ 2.30GHz processors and 200GB of RAM. We made an extensive use of \verb|Tensorflow| \cite{tensorflow} and IBM \verb|adversarial|-\verb|robustness|-\verb|toolbox| \cite{ibm-art} libraries.

% Our baseline model for MNIST is a very simple CNN with 2 Convolutional layers of 32 and 64 units, 1 MaxPooling layer and 2 Dropout layers. The baseline for cifar alternates between 6 Convolutional and MaxPooling layers, with Batch normalization in between, plus a Dropout layer. In both cases we chose \textit{ReLU} (Rectified Linear Unit) activation function $f(x)= max\{0,x\}$ and \textit{categorical crossentropy} loss function $$CE=\sum_{i=1}^K y_i \, \log(f(x_i,\theta)).$$

\subsection{Adversarial robustness}
Our approach for evaluating the robustness consists in testing the baseline models against several adversarial attacks, using the generated attacks to perform adversarial training on the baselines and, finally, comparing RP-Ensemble and RP-Regularizer with these robust baselines. Such procedure is intended to investigate the generalization capabilities of our methods, which are completely unaware of the chosen attacks, yet compared to models that should exhibit ideal performances against the adversaries, i.e. the adversarially trained ones.

%%%%%%%% ens figures %%%%%%%%%
\begin{figure}[!tbp]
	\centering
	\includegraphics[width=\linewidth]{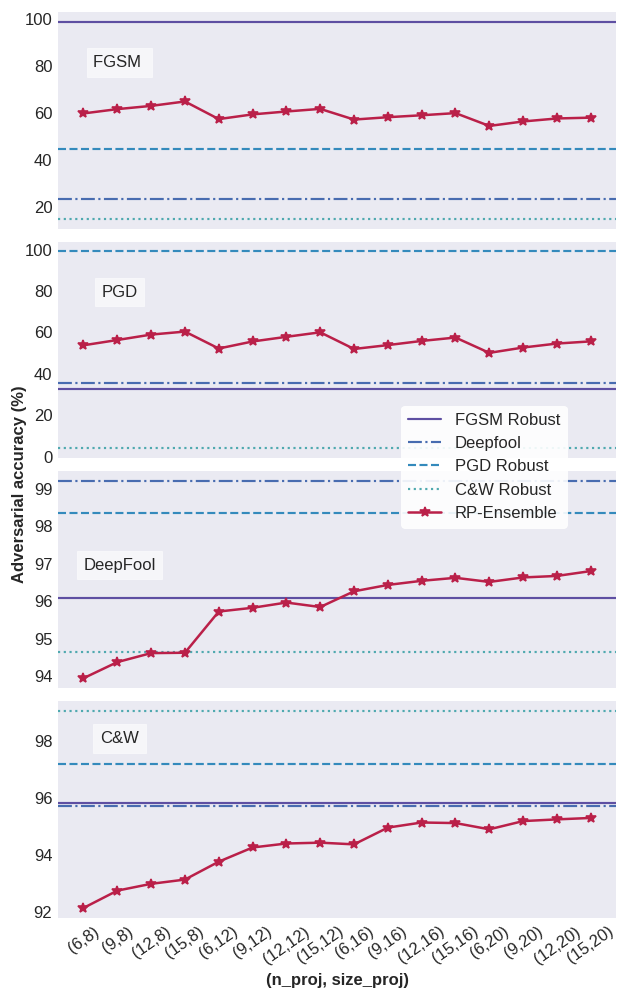}
	\caption{Test accuracy of the baseline, its robust versions and RP-Ensemble model on MNIST dataset. The robust models are the result of adversarial training on the perturbed training sets. RP-Ensemble model is been trained on multiple combinations of number of projections and size of each projection. The evaluations are performed on the original test set and its adversarially perturbed versions.}
	\label{fig:mnist_randens_accuracy}
\end{figure}

\begin{figure}[!h]
	\centering
	\includegraphics[width=0.95\linewidth]{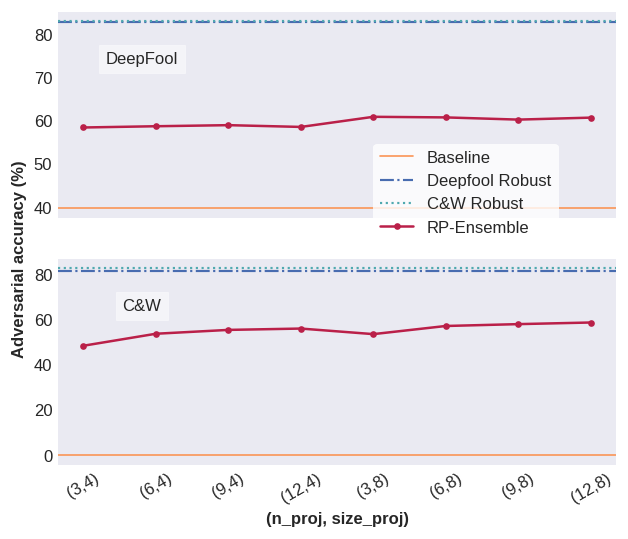}
	\caption{Test accuracy of the baseline, its robust versions and RP-Ensemble model on CIFAR-10 dataset. The robust models are the result of adversarial training on the perturbed training sets. RP-Ensemble model is been trained on multiple combinations of number of projections and size of each projection. The evaluations are performed on the original test set and its adversarially perturbed versions.}
	\label{fig:cifar_randens_accuracy}
\end{figure}
%%%%%%%%%%%%%%%%%%%%%%%%%%

We trained multiple versions of RP-Ensemble, using all the possible combinations of number of projections and size of the projected subspaces shown in Table \ref{tab:ensemble_combinations}. The choice of the classifiers in RP-Ensemble is arbitrary and does not require any model selection step. Indeed, in our experiments each classifier $\psi_j$ is indexed by the seed $j$ used for sampling the projection matrix $R_j$.

We trained a single version of RP-Regularizer on each dataset, by uniformly sampling the number of projections and size of each projection at each training step, as reported in Table \ref{tab:cifar_randreg_accuracy}. We computed the pseudo-inverse matrix $R_j^\dagger$ by using the SVD decomposition of $R_j$, in order to ensure numerical stability.

\begin{table}[!ht]
\caption{Number of projections and size of each \\ projection used for RP-Ensemble.}
\centering
\begin{tabular}{c|c|c}
    \toprule
    \bf Dataset &  \bf Number of projections & \bf Projection size\\
    \midrule
    MNIST & $6,9,12,15$ & $8,12,16,20$\\
    CIFAR-10 & $3,6,9,12$ & $4,8$\\
    \bottomrule
\end{tabular}
\label{tab:ensemble_combinations}
\end{table}

\begin{table}[!ht]
\caption{Number of projections and size of each \\ projection used for RP-Regularizer.}
\centering
\begin{tabular}{c|c|c}
    \toprule
    \bf Dataset &  \bf Number of projections & \bf Projection size\\
    \midrule
    MNIST & \verb|n_proj| $\sim \mathcal{U}(2,8)$ & \verb|size_proj|$ \sim \mathcal{U}(15,25)$\\
    CIFAR-10 & \verb|n_proj|$=1$  & \verb|size_proj|$ \sim \mathcal{U}(5,10)$\\
    \bottomrule
\end{tabular}
\label{tab:regularizer_combinations}
\end{table}

We crafted adversarial perturbations on the original test set using the baseline model, then tested the robustness of RP-Ensemble to the adversaries in terms of prediction accuracy, both on MNIST (Fig. \ref{fig:mnist_randens_accuracy}) and CIFAR-10 (Fig. \ref{fig:cifar_randens_accuracy}). Tables \ref{tab:mnist_randens_accuracy} and \ref{tab:cifar_randens_accuracy} in the Appendix report the exact numerical values for the prediction accuracy. RP-Ensemble brings a general improvement in the adversarial robustness of the baseline model. Adversarially trained robust models show great results on their target attacks but perform poorly on the other ones, while RP-Ensemble preserves its robustness across the different attacks.

%%%%%%%%%% reg figures %%%%%%%%%%%
\begin{figure}[!htbp]
	\centering
	\includegraphics[width=\linewidth]{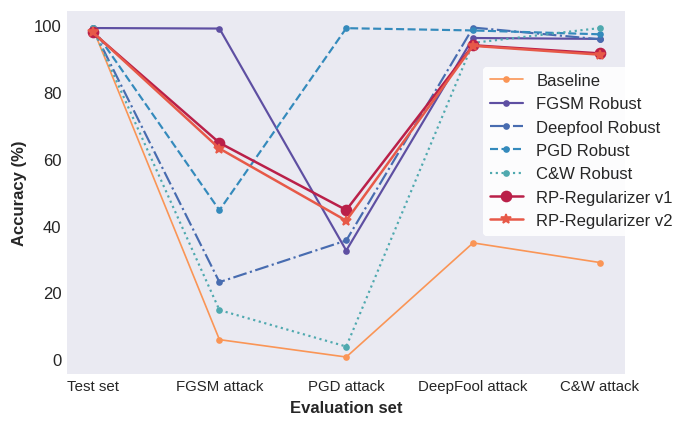}
	\caption{Test accuracy of RP-Regularizer on MNIST. We compare the baseline model, the adversarially trained robust models and two different versions of RP-Regularizer model, namely  $\mathcal{R}_\text{v1}$ (eq. \ref{eq:v1}) and $\mathcal{R}_\text{v2}$ (eq. \ref{eq:v2}).
    Adversarial perturbations are produced on the baseline model using FGSM, PGD, Deepfool and Carlini \& Wagner attacks. }
	\label{fig:mnist_randreg_accuracy}
\end{figure}

\begin{figure}[!htbp]
	\centering
    	\includegraphics[width=0.95\linewidth]{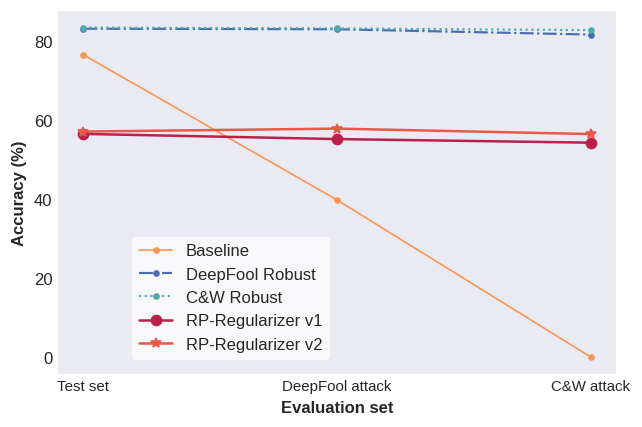}
	\caption{Test accuracy of RP-Regularizer on CIFAR-10. We compare the baseline model, the adversarially trained robust models and two different versions of RP-Regularizer model, namely  $\mathcal{R}_\text{v1}$ (eq. \ref{eq:v1}) and $\mathcal{R}_\text{v2}$ (eq. \ref{eq:v2}).
Adversarial perturbations are produced on the baseline model using FGSM, PGD, Deepfool and Carlini \& Wagner attacks. }	
	\label{fig:cifar_randreg_accuracy}
\end{figure}
%%%%%%%%%%%%%%%%%%%%%%%%%%%%%%%%%%

RP-Regularizer is able to reach competitive performances in comparison to the SOTA models on MNIST (Fig. \ref{fig:mnist_randreg_accuracy}). Prediction accuracies are higher on DeepFool and Carlini \& Wagner attacks than on FGSM and PGD, suggesting that this method performs better on algorithms which are optimized to produce perturbations that are closer to the original samples (e.g. C\&W), rather than faster in computation (e.g. FGSM). 
The results are less striking on CIFAR-10 (Fig. \ref{fig:cifar_randreg_accuracy}), but we stress that the robustness of RP-Regularizer improves as the number of projections increases and that on CIFAR-10 we kept it low (always equal to $1$) to maintain a balance between  computational efficiency and adversarial accuracy.
%, and that we compromised on the choice of this parameter with the aim of achieving good adversarial robustness while also preserving computational efficiency.
The trade-off between these two objectives needs to be further explored. In RP-Regularizer $\ell$ is a crossentropy loss function.

\subsection{Computational efficiency of RP-Ensemble}

Classifiers $\psi_j$ from PR-Ensemble are defined in independent projected subspaces, thus their training can be efficiently parallelized.
This allows to keep its training time close to that of the baseline, or even lower when choosing a small number of projections
(Fig. \ref{fig:randens_training_complexity}).

Moreover, pairwise distances between the projected points are nearly preserved, so the projected versions of the original images contain most of the original information. This implies that the projection classifiers $\psi_j$ are computationally efficient, due to the dimensionality reduction of their inputs, and are also able to learn features which turn out to be significant as a defense against the attacks.

\begin{figure}[!ht]
	\centering
	%\includegraphics[width=8cm]{}
	%\hspace{1cm}
	%\includegraphics[width=8cm]{}
	\includegraphics[width=\linewidth]{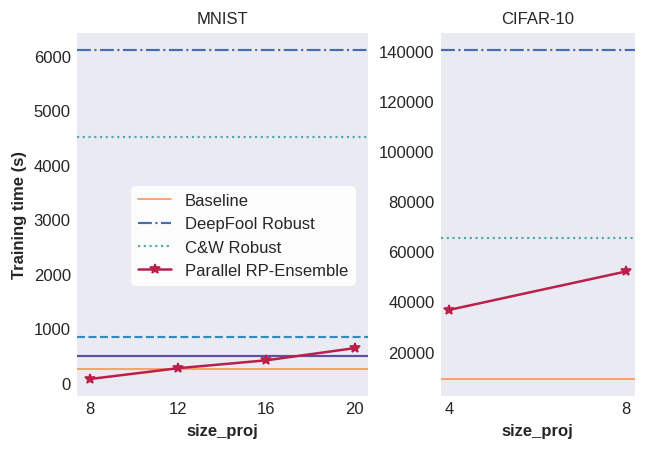}
	\caption{
	Training time of RP-Ensemble on MNIST and CIFAR-10. We compare its efficiency to that of the baseline models and the adversarially trained robust models. %On CIFAR-10 we only report the results for  size\_proj=8, because higher values produces similar robustness performances but worse efficiency.
	}
	\label{fig:randens_training_complexity}
\end{figure}
\section{Related Work}
\label{sec:related_work}

\subsection{Adversarial training}
One of the simplest and most effective approaches for learning robust models is \emph{adversarial training} \cite{goodfellow2015explaining}. This process consists in training a classifier by including adversarial examples in the training data, thus it allows to directly convert any attack into a defense. 
The biggest limitation of this approach is that it tends to overfit the chosen attack, meaning that the adversarially trained model perform well on the attacks which they learned to defend from, but might show poor transferability on other threats. Adversarial training could be interpreted as a form of data augmentation, which significantly differs from the traditional approach: instead of applying transformations that are expected to occur in the test set (translations, rotations, etc.), only the most unlikely examples are added. This method corresponds to a dilation of the manifold: adversarial examples are learned in a halo around the surface, which makes the manifold smoother \cite{dube2018highdim}. In this regard, it should be noted that RP-Ensemble does not perform any data augmentation in the original high dimensional space, since the projected data samples lie in new subspaces. RP-Regularizer, instead, produces new high dimensional examples, which could be formulated as perturbations of the original ones. 

In~\cite{finlay19improved} Finlay et al. show that regularization of the loss gradient on the inputs improves adversarial robustness. In particular, they
notice that the \emph{Total Variation} regularization \cite{rudin1992totalvariation} can be interpreted as the regularization induced by a single step of adversarial training on gradient-based attacks.
%Motivated by this work, in \ref{sec:rp_regularizer} we propose a regularization technique which combines the norm of the loss gradients, intended as a measure of vulnerability, and the expectation over random projections of the inputs. In doing so, we aim at exploiting relevant adversarial features during training.

\subsection{Randomization in a high codimension setting}

Randomization has been proven effective as a defense \cite{he2019parametric, araujo2019robust}, a detection \cite{drenkow2020random} and a regularization \cite{durrant2015random} strategy against adversarial attacks. Xie et al. \cite{xie2017mitigating} apply two random transformations to the input images,
Liu, Cheng, Zhang and Hsieh \cite{liu2018towards} add random noise between the layers of the architecture, Dhillon et al. \cite{dhillon2018stochastic} randomly prune activations between the layers, Xu, Evans and Qi \cite{xu2017feature} and Liao and Wagner \cite{liao2018defense} propose input denoising and feature denoising methods.
The reasoning behind these techniques is that NNs are usually robust to random perturbations \cite{ren2020adversarial}, thus incorporating them in the models might weaken adversarial perturbations .

We also explore the geometrical properties related to randomization. Khuory et al. \cite{khuory18geometry} first highlighted the role of codimension in the generation of adversarial examples. Their analysis suggests that  adversarial perturbations mainly arise in the directions that are normal to the data manifold, so as the codimension in the embedding space increases there is a higher number of directions in which one could build adversarial perturbations. Our framework is strongly influenced by this finding, as it suggests that by randomly selecting directions it should be more likely to catch features that are significant in the adversarial context.

\subsection{Bayesian interpretation of ensembles}

Recent findings suggests a connection between ensembles of NNs and Bayesian NNs, where the goal is to learn the posterior distribution on the weights and use it to perform predictive inference on new observations.
Lakshminarayanan, Pritzel and Blundell presented \emph{Deep Ensembles} \cite{lakshminarayanan2016simple} as a computationally cheap alternative to Bayesian NNs. Dropout has been used to estimate the predictive uncertainty in \emph{MC-Dropout} ensembles \cite{gal2016dropout}. 
Latest research also shows that Bayesian inference is effective at learning adversarially robust models \cite{bekasov2018bayesian, gal2018sufficient}. Moreover, it has been proved that, under specific theoretical assumptions, Bayesian NNs are robust to gradient-based attacks \cite{carbone2020robustness}.

\subsection{Robustness in a chosen norm}
The robustness of classifiers is strongly related to the geometry of the learned decision boundaries. In fact, in order to learn robust decision boundaries, a model has to correctly classify all the input points lying in a neighbourhood of the data manifold. In particular, adversarially perturbed points always lie extremely close to the decision boundaries \cite{dube2018highdim}.
But robustness conditions change under different $p$-norms, meaning that no single decision boundary can be optimally robust in all norms \cite{khuory18geometry}.
E.g. if a classifier is trained to be robust under $L_\infty$ norm, poor robustness under the $L_2$ norm should be expected. 
In general, no distance metric can be considered a perfect measure of similarity \cite{carliniwagner16towards}, so one of the strengths of our methods is that they are totally independent on the norm chosen for the attacks.
\section{Conclusions}

Adversarial examples show that many of the modern machine learning algorithms can be fooled in unexpected ways. Both in terms of attacks and defenses, many theoretical problems still remain open. From a practical point of view, no one has yet designed a powerful defense algorithm which could be suitable against a variety of attacks, with different degrees of knowledge about models under attack and their predictions. The most effective defense techniques, e.g. adversarial training, are still too computationally expensive.

We empirically showed that random projections of the training data act as attack-independent adversarial features, that can be used to provide better resilience to adversarial perturbations. 
We proposed a fine-tuning method and a regularization method, both based on the computation of random projections of the inputs. As future work we plan to improve the computational cost of RP-Regularizer and to compare the performances of our methods to that of other attack-independent defense strategies. We believe that a further exploration of the connections between random projections and the geometrical characterization of adversarial regions could bring valuable insights to adversarial defense research.
%\input{sections/Acknowledgment}

%\section*{References}
\bibliographystyle{unsrtnat}
\bibliography{bibliography}

\begin{thebibliography}{38}
\providecommand{\natexlab}[1]{#1}
\providecommand{\url}[1]{\texttt{#1}}
\expandafter\ifx\csname urlstyle\endcsname\relax
  \providecommand{\doi}[1]{doi: #1}\else
  \providecommand{\doi}{doi: \begingroup \urlstyle{rm}\Url}\fi

\bibitem[Szegedy et~al.(2014)Szegedy, Zaremba, Sutskever, Bruna, Erhan,
  Goodfellow, and Fergus]{szegedy2014intriguing}
Christian Szegedy, Wojciech Zaremba, Ilya Sutskever, Joan Bruna, Dumitru Erhan,
  Ian Goodfellow, and Rob Fergus.
\newblock Intriguing properties of neural networks, 2014.

\bibitem[Kurakin et~al.(2016)Kurakin, Goodfellow, and
  Bengio]{kurakin2017physical}
Alexey Kurakin, Ian~J. Goodfellow, and Samy Bengio.
\newblock Adversarial examples in the physical world.
\newblock \emph{CoRR}, abs/1607.02533, 2016.
\newblock URL \url{http://arxiv.org/abs/1607.02533}.

\bibitem[Carlini et~al.(2019)Carlini, Athalye, Papernot, Brendel, Rauber,
  Tsipras, Goodfellow, Madry, and Kurakin]{carlini2019onevaluating}
Nicholas Carlini, Anish Athalye, Nicolas Papernot, Wieland Brendel, Jonas
  Rauber, Dimitris Tsipras, Ian~J. Goodfellow, Aleksander Madry, and Alexey
  Kurakin.
\newblock On evaluating adversarial robustness.
\newblock \emph{CoRR}, abs/1902.06705, 2019.
\newblock URL \url{http://arxiv.org/abs/1902.06705}.

\bibitem[Evtimov et~al.(2017)Evtimov, Eykholt, Fernandes, Kohno, Li, Prakash,
  Rahmati, and Song]{eykholt17robust}
Ivan Evtimov, Kevin Eykholt, Earlence Fernandes, Tadayoshi Kohno, Bo~Li, Atul
  Prakash, Amir Rahmati, and Dawn Song.
\newblock Robust physical-world attacks on machine learning models.
\newblock \emph{CoRR}, abs/1707.08945, 2017.
\newblock URL \url{http://arxiv.org/abs/1707.08945}.

\bibitem[Fawzi et~al.(2018)Fawzi, Fawzi, and Frossard]{fawzi2018analysis}
Alhussein Fawzi, Omar Fawzi, and Pascal Frossard.
\newblock Analysis of classifiers’ robustness to adversarial perturbations.
\newblock \emph{Machine Learning}, 107\penalty0 (3):\penalty0 481--508, 2018.

\bibitem[Goodfellow et~al.(2015)Goodfellow, Shlens, and
  Szegedy]{goodfellow2015explaining}
Ian Goodfellow, Jonathon Shlens, and Christian Szegedy.
\newblock Explaining and harnessing adversarial examples.
\newblock \emph{International Conference on Learning Representations}, 2015.
\newblock URL \url{http://arxiv.org/abs/1412.6572}.

\bibitem[Genovese et~al.(2012)Genovese, Perone-Pacifico, Verdinelli, Wasserman,
  et~al.]{genovese2012manifold}
Christopher~R Genovese, Marco Perone-Pacifico, Isabella Verdinelli, Larry
  Wasserman, et~al.
\newblock Manifold estimation and singular deconvolution under hausdorff loss.
\newblock \emph{The Annals of Statistics}, 40\penalty0 (2):\penalty0 941--963,
  2012.

\bibitem[Dasgupta and Freund(2008)]{dasgupta2008random}
Sanjoy Dasgupta and Yoav Freund.
\newblock Random projection trees and low dimensional manifolds.
\newblock In \emph{Proceedings of the fortieth annual ACM symposium on Theory
  of computing}, pages 537--546, 2008.

\bibitem[Donoho and Grimes(2003)]{donoho2003hessian}
David~L Donoho and Carrie Grimes.
\newblock Hessian eigenmaps: Locally linear embedding techniques for
  high-dimensional data.
\newblock \emph{Proceedings of the National Academy of Sciences}, 100\penalty0
  (10):\penalty0 5591--5596, 2003.

\bibitem[Khoury and Hadfield{-}Menell(2018)]{khuory18geometry}
Marc Khoury and Dylan Hadfield{-}Menell.
\newblock On the geometry of adversarial examples.
\newblock \emph{CoRR}, abs/1811.00525, 2018.
\newblock URL \url{http://arxiv.org/abs/1811.00525}.

\bibitem[Dasgupta and Gupta(1999)]{johnson-lind}
Sanjoy Dasgupta and Anupam Gupta.
\newblock An elementary proof of the johnson-lindenstrauss lemma.
\newblock Technical report, 1999.

\bibitem[Kaski(1998)]{kaski98}
Samuel Kaski.
\newblock Dimensionality reduction by random mapping: Fast similarity
  computation for clustering.
\newblock volume~1, pages 413 -- 418 vol.1, 06 1998.
\newblock ISBN 0-7803-4859-1.
\newblock \doi{10.1109/IJCNN.1998.682302}.

\bibitem[Rudin et~al.(1992)Rudin, Osher, and Fatemi]{rudin1992totalvariation}
Leonid~I Rudin, Stanley Osher, and Emad Fatemi.
\newblock Nonlinear total variation based noise removal algorithms.
\newblock \emph{Physica D: nonlinear phenomena}, 60\penalty0 (1-4):\penalty0
  259--268, 1992.

\bibitem[Finlay et~al.(2019)Finlay, Oberman, and Abbasi]{finlay19improved}
Chris Finlay, Adam~M. Oberman, and Bilal Abbasi.
\newblock Improved robustness to adversarial examples using lipschitz
  regularization of the loss, 2019.

\bibitem[Penrose(1955)]{penrose_1955}
R.~Penrose.
\newblock A generalized inverse for matrices.
\newblock \emph{Mathematical Proceedings of the Cambridge Philosophical
  Society}, 51\penalty0 (3):\penalty0 406–413, 1955.
\newblock \doi{10.1017/S0305004100030401}.

\bibitem[LeCun and Cortes(2010)]{mnist}
Yann LeCun and Corinna Cortes.
\newblock {MNIST} handwritten digit database.
\newblock 2010.
\newblock URL \url{http://yann.lecun.com/exdb/mnist/}.

\bibitem[Krizhevsky et~al.()Krizhevsky, Nair, and Hinton]{cifar10}
Alex Krizhevsky, Vinod Nair, and Geoffrey Hinton.
\newblock Cifar-10 (canadian institute for advanced research).
\newblock URL \url{http://www.cs.toronto.edu/~kriz/cifar.html}.

\bibitem[Moosavi-Dezfooli et~al.(2016)Moosavi-Dezfooli, Fawzi, and
  Frossard]{moosavi2016deepfool}
Seyed-Mohsen Moosavi-Dezfooli, Alhussein Fawzi, and Pascal Frossard.
\newblock Deepfool: a simple and accurate method to fool deep neural networks.
\newblock In \emph{Proceedings of the IEEE conference on computer vision and
  pattern recognition}, pages 2574--2582, 2016.

\bibitem[Carlini and Wagner(2016)]{carliniwagner16towards}
Nicholas Carlini and David~A. Wagner.
\newblock Towards evaluating the robustness of neural networks.
\newblock \emph{CoRR}, abs/1608.04644, 2016.
\newblock URL \url{http://arxiv.org/abs/1608.04644}.

\bibitem[Abadi et~al.(2015)Abadi, Agarwal, Barham, Brevdo, Chen, Citro,
  Corrado, Davis, Dean, Devin, Ghemawat, Goodfellow, Harp, Irving, Isard, Jia,
  Jozefowicz, Kaiser, Kudlur, Levenberg, Man\'{e}, Monga, Moore, Murray, Olah,
  Schuster, Shlens, Steiner, Sutskever, Talwar, Tucker, Vanhoucke, Vasudevan,
  Vi\'{e}gas, Vinyals, Warden, Wattenberg, Wicke, Yu, and Zheng]{tensorflow}
Mart\'{\i}n Abadi, Ashish Agarwal, Paul Barham, Eugene Brevdo, Zhifeng Chen,
  Craig Citro, Greg~S. Corrado, Andy Davis, Jeffrey Dean, Matthieu Devin,
  Sanjay Ghemawat, Ian Goodfellow, Andrew Harp, Geoffrey Irving, Michael Isard,
  Yangqing Jia, Rafal Jozefowicz, Lukasz Kaiser, Manjunath Kudlur, Josh
  Levenberg, Dan Man\'{e}, Rajat Monga, Sherry Moore, Derek Murray, Chris Olah,
  Mike Schuster, Jonathon Shlens, Benoit Steiner, Ilya Sutskever, Kunal Talwar,
  Paul Tucker, Vincent Vanhoucke, Vijay Vasudevan, Fernanda Vi\'{e}gas, Oriol
  Vinyals, Pete Warden, Martin Wattenberg, Martin Wicke, Yuan Yu, and Xiaoqiang
  Zheng.
\newblock {TensorFlow}: Large-scale machine learning on heterogeneous systems,
  2015.
\newblock URL \url{http://tensorflow.org/}.
\newblock Software available from tensorflow.org.

\bibitem[Nicolae et~al.(2018)Nicolae, Sinn, Minh, Rawat, Wistuba, Zantedeschi,
  Molloy, and Edwards]{ibm-art}
Maria{-}Irina Nicolae, Mathieu Sinn, Tran~Ngoc Minh, Ambrish Rawat, Martin
  Wistuba, Valentina Zantedeschi, Ian~M. Molloy, and Benjamin Edwards.
\newblock Adversarial robustness toolbox v0.2.2.
\newblock \emph{CoRR}, abs/1807.01069, 2018.
\newblock URL \url{http://arxiv.org/abs/1807.01069}.

\bibitem[Dube(2018)]{dube2018highdim}
Simant Dube.
\newblock High dimensional spaces, deep learning and adversarial examples.
\newblock \emph{CoRR}, abs/1801.00634, 2018.
\newblock URL \url{http://arxiv.org/abs/1801.00634}.

\bibitem[He et~al.(2019)He, Rakin, and Fan]{he2019parametric}
Zhezhi He, Adnan~Siraj Rakin, and Deliang Fan.
\newblock Parametric noise injection: Trainable randomness to improve deep
  neural network robustness against adversarial attack.
\newblock In \emph{Proceedings of the IEEE/CVF Conference on Computer Vision
  and Pattern Recognition}, pages 588--597, 2019.

\bibitem[Araujo et~al.(2019)Araujo, Meunier, Pinot, and
  Negrevergne]{araujo2019robust}
Alexandre Araujo, Laurent Meunier, Rafael Pinot, and Benjamin Negrevergne.
\newblock Robust neural networks using randomized adversarial training.
\newblock \emph{arXiv preprint arXiv:1903.10219}, 2019.

\bibitem[Drenkow et~al.(2020)Drenkow, Fendley, and Burlina]{drenkow2020random}
Nathan Drenkow, Neil Fendley, and Philippe Burlina.
\newblock Random projections for adversarial attack detection.
\newblock \emph{arXiv preprint arXiv:2012.06405}, 2020.

\bibitem[Durrant and Kab{\'a}n(2015)]{durrant2015random}
Robert~J Durrant and Ata Kab{\'a}n.
\newblock Random projections as regularizers: learning a linear discriminant
  from fewer observations than dimensions.
\newblock \emph{Machine Learning}, 99\penalty0 (2):\penalty0 257--286, 2015.

\bibitem[Xie et~al.(2017)Xie, Wang, Zhang, Ren, and Yuille]{xie2017mitigating}
Cihang Xie, Jianyu Wang, Zhishuai Zhang, Zhou Ren, and Alan Yuille.
\newblock Mitigating adversarial effects through randomization.
\newblock \emph{arXiv preprint arXiv:1711.01991}, 2017.

\bibitem[Liu et~al.(2018)Liu, Cheng, Zhang, and Hsieh]{liu2018towards}
Xuanqing Liu, Minhao Cheng, Huan Zhang, and Cho-Jui Hsieh.
\newblock Towards robust neural networks via random self-ensemble.
\newblock In \emph{Proceedings of the European Conference on Computer Vision
  (ECCV)}, pages 369--385, 2018.

\bibitem[Dhillon et~al.(2018)Dhillon, Azizzadenesheli, Lipton, Bernstein,
  Kossaifi, Khanna, and Anandkumar]{dhillon2018stochastic}
Guneet~S Dhillon, Kamyar Azizzadenesheli, Zachary~C Lipton, Jeremy Bernstein,
  Jean Kossaifi, Aran Khanna, and Anima Anandkumar.
\newblock Stochastic activation pruning for robust adversarial defense.
\newblock \emph{arXiv preprint arXiv:1803.01442}, 2018.

\bibitem[Xu et~al.(2017)Xu, Evans, and Qi]{xu2017feature}
Weilin Xu, David Evans, and Yanjun Qi.
\newblock Feature squeezing: Detecting adversarial examples in deep neural
  networks.
\newblock \emph{arXiv preprint arXiv:1704.01155}, 2017.

\bibitem[Liao et~al.(2018)Liao, Liang, Dong, Pang, Hu, and
  Zhu]{liao2018defense}
Fangzhou Liao, Ming Liang, Yinpeng Dong, Tianyu Pang, Xiaolin Hu, and Jun Zhu.
\newblock Defense against adversarial attacks using high-level representation
  guided denoiser.
\newblock In \emph{Proceedings of the IEEE Conference on Computer Vision and
  Pattern Recognition}, pages 1778--1787, 2018.

\bibitem[Ren et~al.(2020)Ren, Zheng, Qin, and Liu]{ren2020adversarial}
Kui Ren, Tianhang Zheng, Zhan Qin, and Xue Liu.
\newblock Adversarial attacks and defenses in deep learning.
\newblock \emph{Engineering}, 6\penalty0 (3):\penalty0 346--360, 2020.

\bibitem[Lakshminarayanan et~al.(2016)Lakshminarayanan, Pritzel, and
  Blundell]{lakshminarayanan2016simple}
Balaji Lakshminarayanan, Alexander Pritzel, and Charles Blundell.
\newblock Simple and scalable predictive uncertainty estimation using deep
  ensembles.
\newblock \emph{arXiv preprint arXiv:1612.01474}, 2016.

\bibitem[Gal and Ghahramani(2016)]{gal2016dropout}
Yarin Gal and Zoubin Ghahramani.
\newblock Dropout as a bayesian approximation: Representing model uncertainty
  in deep learning.
\newblock In \emph{international conference on machine learning}, pages
  1050--1059. PMLR, 2016.

\bibitem[Bekasov and Murray(2018)]{bekasov2018bayesian}
Artur Bekasov and Iain Murray.
\newblock Bayesian adversarial spheres: Bayesian inference and adversarial
  examples in a noiseless setting.
\newblock \emph{arXiv preprint arXiv:1811.12335}, 2018.

\bibitem[Gal and Smith(2018)]{gal2018sufficient}
Yarin Gal and Lewis Smith.
\newblock Sufficient conditions for idealised models to have no adversarial
  examples: a theoretical and empirical study with bayesian neural networks.
\newblock \emph{arXiv preprint arXiv:1806.00667}, 2018.

\bibitem[Carbone et~al.(2020)Carbone, Wicker, Laurenti, Patane, Bortolussi, and
  Sanguinetti]{carbone2020robustness}
Ginevra Carbone, Matthew Wicker, Luca Laurenti, Andrea Patane, Luca Bortolussi,
  and Guido Sanguinetti.
\newblock Robustness of bayesian neural networks to gradient-based attacks,
  2020.

\bibitem[Vershynin(2018)]{vershynin2018high}
Roman Vershynin.
\newblock \emph{High-dimensional probability: An introduction with applications
  in data science}, volume~47.
\newblock Cambridge university press, 2018.

\end{thebibliography}
\newpage
\section{Appendix}

\subsection{Prediction accuracy}

\begin{table}[!htbp]
\caption{Test accuracy of the baseline, its robust versions and RP-Ensemble model on MNIST dataset. The robust models are the result of adversarial training on the perturbed training sets. RP-Ensemble model is been trained on multiple combinations of number of projections and size of each projection. The evaluations are performed on the original test set and its adversarially perturbed versions.}
\center
	\begin{tabular}{ c | c c c c c }
	\toprule
		\begin{tabular}{@{}c@{}}
    		Prediction\\ accuracy (\%)
    		\end{tabular}  
    	& Test set & FGSM & PGD & DeepFool & C\&W
    	\\
	\midrule
	\midrule
	Baseline & $99.13$ & $5.91$ & $0.71$ & $34.83$ & $28.96$\\
	\midrule
	\multicolumn{6}{c}{Adversarially trained models}\\
	\cline{2-6} \\ [-1.5ex]
	FGSM &  $99.13$ & $\mathbf{98.91}$ & $32.55$ & $96.08$ & $95.82$ \\
	PGD & $99.10$ & $44.60$ & $\mathbf{99.02}$ & $98.34$ & $97.19$ \\
	DeepFool & $99.03$ & $23.11$ & $35.55$ & $\mathbf{99.20}$ & ${95.70}$\\
	C\&W & $99.10$ & ${14.74}$ & ${3.85}$ & ${94.63}$ & $\mathbf{99.06}$\\
	\midrule
	\multicolumn{6}{c}{RP-Ensemble on (n\_proj, size\_proj) combinations}\\
	\cline{2-6} \\ [-1.6ex]
	$(6, 8)$  & $97.66$ & $59.71$  & $53.68$  & $93.94$  & $92.12$\\    
$(9, 8)$  & $97.57$ & $61.55$ & $56.23$  & $94.37$  & $92.73$\\   
$(12, 8)$  & $97.45$ & $62.93$  & $58.85$  & $94.61$  & $92.97$\\   
$(15, 8)$  & $97.47$ & $\mathbf{64.82}$ & $\mathbf{60.30}$  & $94.62$  & $93.12$\\
$(6  , 12 )$  &      $98.12$ & $57.29$  & $52.12$  & $95.72$  & $93.75$\\    
$(9  , 12 )$  & $98.02$ & $59.30$  & $55.52$  & $95.82$  & $94.25$\\
$(12 , 12 )$  & $ 97.97$ & $60.54$  & $57.77$  & $95.96$  & $94.39$\\
(15 , 12 )  &  $97.91$ & $61.65$  & $59.95$  & $95.84$  & $94.42$\\    
$(6  , 16 )$  &      $98.22$ & $57.09$  & $51.90$  & $96.26$  & $94.36$\\    
$(9  , 16 )$  & $98.33$ & $58.10$  & $53.77$  & $96.43$  & $94.95$\\    
$(12 , 16 )$  &      $98.32$ & $58.93$  & $55.78$  & $96.54$  & $95.13$\\    
$(15 , 16 )$  & $98.26$ & $59.85$  & $57.42$  & $96.62$  & $95.11$\\
$(6  , 20 )$  &      $98.49$ & $54.37$  & $50.02$  & $96.51$  & $94.89$\\    
$(9  , 20 )$  &      $98.42$ & $56.28$  & $52.60$  & $96.63$  & $95.18$\\    
$(12 , 20 )$  &     $98.40$ & $57.56$  & $54.52$  & $96.67$  & $95.24$\\    
$(15 , 20 )$  &      $98.40$ & $57.91$  & $55.57$  & $\mathbf{96.80}$  & $\mathbf{95.29}$\\    
	\bottomrule
\end{tabular}

\label{tab:mnist_randens_accuracy}
\end{table}

\begin{table}[!ht]
\caption{Test accuracy of the baseline, its robust versions and RP-Ensemble model on CIFAR-10 dataset. The robust models are the result of adversarial training on the perturbed training sets. RP-Ensemble model is been trained on multiple combinations of number of projections and size of each projection. The evaluations are performed on the original test set and its adversarially perturbed versions.}
\center
	\begin{tabular}{ c |c c c}
	\toprule
		\begin{tabular}{@{}c@{}}
    		Prediction\\ accuracy (\%)
    		\end{tabular}  
    	& Test set & DeepFool & C\&W
    	\\
	\midrule
	\midrule
	Baseline & $76.52$ &  $39.77$ &  $0.00$\\
	\midrule
	\multicolumn{4}{c}{Adversarially trained models}\\
	\cline{2-4}  \\ [-1.6ex]
	DeepFool & $83.16$ &  $\mathbf{83.01}$  & $81.67$  \\
	C\&W & $83.44$  & $83.23$  & $\mathbf{82.79}$ \\
	\midrule
	\multicolumn{4}{c}{
		\begin{tabular}{@{}c@{}}
    		RP-Ensemble model 	
    		 %+ Baseline probability
    		  on (n\_proj, size\_proj)
    		\end{tabular} 
	
	}\\
	\cline{2-4}\\ [-1.6ex]
	$(3, 4)$ & $\mathbf{67.93}$ & $58.52$ & $48.48$ \\
    $(6, 4)$ & $64.59$ & $58.81$ & $53.78$ \\
    $(9, 4)$ & $63.15$ & $59.06$ & $55.48$ \\
    $(12, 4)$ & $61.93$ & $58.65$ & $56.07$ \\
	$(3, 8)$ & $67.66$ &  $\mathbf{61.00}$ &  $53.61$ \\		
	$(6, 8)$  & $64.83$ &  $60.86$ &  $57.21$  \\   
	$(9, 8)$ & $63.36$ &  $60.35$ &  $58.03$\\   
	$(12, 8)$ & $62.99$ &  $60.81$ &  $\mathbf{58.75}$ \\  
	\bottomrule
\end{tabular}

\label{tab:cifar_randens_accuracy}
\end{table}

\begin{table}[!ht]
\caption{
Test accuracy of RP-Regularizer on MNIST. We compare the baseline model, the adversarially trained robust models and two different versions of RP-Regularizer model, namely  $\mathcal{R}_\text{v1}$ (\ref{eq:v1}) and $\mathcal{R}_\text{v2}$ (\ref{eq:v2}).
Adversarial perturbations are produced on the baseline model using FGSM, PGD, Deepfool and Carlini \& Wagner attacks. 
}
\center
	\begin{tabular}{ c |c c c c c }
	\toprule
		\begin{tabular}{@{}c@{}}
    		Prediction\\ accuracy (\%)
    		\end{tabular}  
    	& Test set & FGSM & PGD & DeepFool & C\&W
    	\\
	\midrule
	\midrule
	Baseline & $99.13$ & $5.91$ & $0.71$ & $34.83$ & $28.96$\\
	\midrule
	\multicolumn{6}{c}{Adversarially trained models}\\
	\cline{2-6} \\ [-1.6ex]
	FGSM &  $99.13$ & $\mathbf{98.91}$ & $32.55$ & $96.08$ & $95.82$ \\
	PGD &  $99.10$ & $44.60$ & $\mathbf{99.02}$ & $98.34$ & $97.19$ \\
	DeepFool & $99.03$ & $23.11$ & $35.55$ & $\mathbf{99.20}$ & $95.70$\\
	C\&W & $99.10$ & ${14.74}$ & ${3.85}$ & ${94.63}$ & $\mathbf{99.06}$\\
	\midrule

	\multicolumn{6}{c}{RP-Regularizer model}\\

	\cline{2-6} \\ [-1.6ex]
	%v0, $\lambda=0.4$  &           94.90 &  38.17 &  69.67  & 90.89 &  89.07  \\
	$\mathcal{R}_\text{v1}$, $\lambda=0.4$ & $97.92$&  $62.34$&  $38.96$& $93.24$& $90.75$ \\
	$\mathcal{R}_\text{v2}$, $\lambda=0.4$ &$97.82$& $63.25$&   $42.37$&  $93.86$&  $91.56$ \\ 
	%v0, $\lambda=0.5$  &       95.26  & 38.72  & 70.05 &  91.40 &  89.16 \\
$\mathcal{R}_\text{v1}$, $\lambda=0.5$ & $97.53$& $\mathbf{69.12}$&  $\mathbf{52.39}$& $\mathbf{94.44}$& $\mathbf{91.96}$ \\
$\mathcal{R}_\text{v2}$, $\lambda=0.5$  & $98.06$& $60.64$ & $36.28$& $93.92$&   $91.05$ \\ 
	%v0, $\lambda=0.6$ & 95.59 &  35.85 &  67.28 &  91.68 &  89.92   \\
	$\mathcal{R}_\text{v1}$, $\lambda=0.6$ & $97.80$ &$62.78$ & $42.61$ & $94.05$ &$91.73$\\
	$\mathcal{R}_\text{v2}$, $\lambda=0.6$ &$97.69$& $65.25$ & $45.77$&     $93.45$&  $90.70$  	\\
	
	\bottomrule
	\end{tabular}

\label{tab:mnist_randreg_accuracy}
\end{table}

\begin{table}[!ht]
\caption{Test accuracy of RP-Regularizer on MNIST.
We compare the baseline model, the adversarially trained robust models and two different versions of RP-Regularizer model, namely  $\mathcal{R}_\text{v1}$ (\ref{eq:v1}) and $\mathcal{R}_\text{v2}$ (\ref{eq:v2}).
Adversarial perturbations are produced on the baseline model using Deepfool and Carlini \& Wagner attacks. }
\center
	\begin{tabular}{ c | c c c}
	\toprule	\begin{tabular}{@{}c@{}}
    		Prediction\\ accuracy (\%)
    		\end{tabular}  
    	& Test set & DeepFool & C\&W	\\
	\midrule
	\midrule
	Baseline & 76.52  & 39.77  & 0.00  \\ 
	\midrule
	\multicolumn{4}{c}{Adversarially trained models}\\
	\cline{2-4} \\ [-1.6ex]
	DeepFool & 83.16  & 83.01  & 81.67 \\ 
	C\&W  & 83.44  & 83.23  & 82.79 \\
	\midrule
	\multicolumn{4}{c}{RP-Regularizer model, $\lambda=0.5$}\\
	\cline{2-4} \\ [-1.6ex]
	%no projections & 51.93 &  51.42 & 50.96  \\ 
 	$\mathcal{R}_\text{v1}$ & 56.51 &  55.20 & 54.29  \\ 
 	$\mathcal{R}_\text{v2}$ & 57.10 &  57.85 & 56.47   \\
	\bottomrule
	\end{tabular}
			    						
\label{tab:cifar_randreg_accuracy}
\end{table}

\subsection{Adversarial attacks}
\label{sec:adv_attacks}

\emph{Fast Gradient Sign Method} (FGSM)~\cite{goodfellow2015explaining} is an untargeted attack, i.e. it does not push the misclassification to any specific class. FGSM adds a fixed noise to the input $x$ in the direction of the loss gradient w.r.t. $x$ 
$$
\tilde{x}=x + \epsilon \,\text{sgn}  \nabla_x \ell(f(x,\theta),y).
$$
It is a popular choice for adversarial training on a large number of samples due to its computational efficiency, since
it only requires one gradient evaluation for each given input.

\vspace{3pt}

\emph{Projected Gradient Descent} (PGD)~\cite{kurakin2017physical} is an iterative attack which starts from a random perturbation $\tilde{x}_0$ of $x$ in an $\epsilon\text{-}L_\infty$ ball around the input
sample. At each step, it performs an FGSM attack with a smaller step size $\delta<\epsilon$ and projects the attack back in the $\epsilon\text{-}L_\infty$ ball
$$
\tilde{x}_{t+1} = \text{Proj}\{\tilde{x}_{t}+\delta \, \text{sgn} \nabla_x \ell(f(\tilde{x}_t,\theta),y)\}
$$

\vspace{3pt}

\emph{DeepFool}~\cite{moosavi2016deepfool} find the nearest decision boundary to the data point $x$ in the $L_2$ norm and pushes the perturbation beyond this boundary. It iteratively minimizes the classifier $f$ around the input point until it produces a misclassification:
\begin{gather*}
    \tilde{x}_t = \tilde{x}_{t-1}+r_t\\
    \arg\min_{r_t} ||r_t||_2\\
    f(\tilde{x}_t,\theta)+\nabla_x f(\tilde{x}_t,\theta)^T r_t=0.
\end{gather*}

\vspace{3pt}

\emph{Carlini \& Wagner} (C\&W)~\cite{carliniwagner16towards} in the $L_\infty$ norm searches for the minimal adversarial perturbation producing a wrong classification. It solves the following optimization problem
\begin{gather*}
\min ||x-\tilde{x}||_\infty+c \cdot f(\tilde{x},t)\\
\tilde{x}\in[0,1]^m,
\end{gather*}
where $t$ is the target class.

\subsection{$\mathcal{R}_{v1}$ and $\mathcal{R}_{v2}$ are equivalent as $k\rightarrow \infty$.}
\label{sec:proof_regularizers}

Let  $R:\mathbb{R}^d\rightarrow\mathbb{R}^k$ be a random projection matrix, $R^\dagger:\mathbb{R}^k\rightarrow\mathbb{R}^d$ its pseudo-inverse and $x^\dagger:=R^\dagger Rx \in \mathbb{R}^d$ for $x\in\mathbb{R}^d$. 
For any given couple $(x,y)\in\mathbb{R}^d\times\mathbb{R}^K$ and projection matrix $R$ let us define 
\begin{align*}
\mathcal{T}_1(R) &:= \big|\big| \nabla_x \ell\big(f(x^\dagger,\theta),y\big)\big|\big|_2^2
\\
\mathcal{T}_2(R) &:= \big|\big| R \,\nabla_x \ell\big(f(x,\theta),y\big)\big|\big|_2^2.
\end{align*}

\begin{prop}
Let $R$ be a random projection matrix whose elements are sampled from $\mathcal{N}(0,1/k)$ and whose columns are orthogonal. Suppose that $x\in(\ker R)^\perp$  for all $x\in\mathbb{R}^d$. 
Then $$\mathbb{E}_R\big[\mathcal{T}_2(R)\big]= \frac{d}{k}\mathbb{E}_R\big[\mathcal{T}_1(R)\big].$$
% Let   $\sigma_{j,\min}$ be the smallest singular value of $R_j$ and $\bar{\sigma}:=\mathbb{E}_j\big[ \sigma_{j,\min}^2\big]$. Then 
% $$ \bar{\sigma}\mathbb{E}_j\big[\mathcal{T}_1(R_j)\big] \leq \mathbb{E}_j\big[\mathcal{T}_2(R_j)\big] \leq \frac{d}{k}\mathbb{E}_j\big[\mathcal{T}_1(R_j)\big]. $$
\label{th:equivalence_perp_kernel}
\end{prop}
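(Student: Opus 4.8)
The plan is to reduce both quantities to quadratic forms in a single fixed vector and then take expectations, using only the rotational invariance of the Gaussian ensemble. The only delicate modelling point, which I would settle first, is that $\nabla_x$ in the definition of $\mathcal{T}_1$ must be read as the total derivative in the free variable $x$ through the composition $x\mapsto x^\dagger=R^\dagger Rx$, consistently with how $\mathcal{R}_{v1}$ is written.

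First I would unwind $\mathcal{T}_1$ by the chain rule. Let $G(z):=\nabla_z\ell(f(z,\theta),y)$ denote the input-gradient field and set $G:=G(x)$; write $P:=R^\dagger R$. One of the defining Penrose conditions makes $P$ symmetric, and $P$ is exactly the orthogonal projection onto the row space $(\ker R)^\perp$. Differentiating the composition $x\mapsto\ell(f(R^\dagger Rx,\theta),y)$ gives $\nabla_x\ell(f(x^\dagger,\theta),y)=P^{T}G(x^\dagger)$. The hypothesis $x\in(\ker R)^\perp$ should be used only \emph{after} differentiating: it forces $x^\dagger=R^\dagger Rx=x$, collapsing the evaluation point so that $G(x^\dagger)=G$ becomes independent of $R$ while the Jacobian factor $P^{T}=P$ survives. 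This yields the matched pair $\mathcal{T}_1=\|PG\|_2^2=G^{T}PG$ (using $P^{T}P=P$) and $\mathcal{T}_2=\|RG\|_2^2=G^{T}R^{T}R\,G$, two quadratic forms in the same deterministic vector $G$.

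Since $G$ no longer depends on $R$, the expectations reduce to computing $\mathbb{E}_R[R^{T}R]$ and $\mathbb{E}_R[P]$. The first is immediate from the entrywise law: with independent $\mathcal{N}(0,1/k)$ entries, $\mathbb{E}[(R^{T}R)_{jl}]=\sum_{i=1}^{k}\mathbb{E}[R_{ij}R_{il}]=\delta_{jl}$, so $\mathbb{E}_R[R^{T}R]=I_d$ and $\mathbb{E}_R[\mathcal{T}_2]=\|G\|_2^2$. The second expectation is the heart of the argument and I expect it to be the main obstacle. I would handle it by symmetry: because the entries of $R$ are i.i.d.\ Gaussian, $RO\stackrel{d}{=}R$ for every orthogonal $O$, so the row space is a Haar-uniform $k$-dimensional subspace and $\mathbb{E}_R[P]$ commutes with every orthogonal conjugation; hence $\mathbb{E}_R[P]=c\,I_d$ for a scalar $c$. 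Taking traces and using that $P$ is a rank-$k$ projection almost surely gives $c\,d=\mathbb{E}[\operatorname{tr}P]=k$, i.e.\ $\mathbb{E}_R[P]=(k/d)I_d$ and $\mathbb{E}_R[\mathcal{T}_1]=(k/d)\|G\|_2^2$. Combining, $\mathbb{E}_R[\mathcal{T}_2]=\|G\|_2^2=(d/k)\,\mathbb{E}_R[\mathcal{T}_1]$, as claimed.

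Finally, a remark on the orthogonality hypothesis, which the symmetry computation above does not strictly need. Its natural role (more precisely, near-orthogonality of the \emph{rows}, forcing $RR^{T}\approx(d/k)I_k$) is to upgrade the identity from an expectation to a pointwise statement: $RR^{T}=(d/k)I_k$ gives $R^{\dagger}=(k/d)R^{T}$ and hence $R^{T}R=(d/k)P$ directly, so that $\mathcal{T}_2=(d/k)\mathcal{T}_1$ for each such $R$. I would present the expectation argument as the rigorous core and note this assumption-driven route as the cleaner deterministic version that also motivates the subsequent passage to $\mathcal{R}_{v1}\approx\mathcal{R}_{v2}$.
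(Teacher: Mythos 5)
Your proof is correct, and it reaches the stated identity by a genuinely different route from the paper's --- one that is, in fact, on firmer ground. The paper's proof keeps no Jacobian factor: it reads $\nabla_x \ell(f(x^\dagger,\theta),y)$ as the gradient field merely \emph{evaluated} at $x^\dagger$, so the hypothesis $x\in(\ker R)^\perp$ collapses $\mathcal{T}_1(R)$ to the deterministic quantity $\|\nabla_x\ell(f(x,\theta),y)\|_2^2$, and the factor $d/k$ is then extracted entirely from the expansion $\mathbb{E}_R\big[\|Rx\|_2^2\big]=\sum_{i}\mathbb{E}_R\big[r_i^T r_i\big]x_i^2$ together with the claim $\mathbb{E}_R\big[r_i^T r_i\big]=d/k$ for the columns $r_i\in\mathbb{R}^k$. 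That claim is the fragile point of the paper's argument: a column with $k$ i.i.d.\ $\mathcal{N}(0,1/k)$ entries has $\mathbb{E}\big[r_i^T r_i\big]=1$ (it is the \emph{rows}, vectors in $\mathbb{R}^d$, whose expected squared norm is $d/k$), so under the paper's reading the computation actually yields $\mathbb{E}_R[\mathcal{T}_2]=\mathcal{T}_1$ with no factor $d/k$ at all. Your argument produces the factor from a robust source: keeping the chain-rule Jacobian $P=R^\dagger R$ makes $\mathcal{T}_1=\|PG\|_2^2=G^T P G$ genuinely random; the correct entrywise computation $\mathbb{E}_R[R^T R]=I_d$ gives $\mathbb{E}_R[\mathcal{T}_2]=\|G\|_2^2$; and the rotation-invariance/trace argument gives $\mathbb{E}_R[P]=(k/d)I_d$, so $d/k$ arises from $\operatorname{tr} P=k$, i.e.\ from the rank of the projection, which is where such a dimensional ratio belongs. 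The price is a modelling commitment --- reading $\nabla_x$ as the total derivative through $x\mapsto R^\dagger R x$, which matches the autodiff semantics of Eq.~\ref{eq:v1} but not the paper's own usage --- and the gain is that the proposition as stated becomes a theorem rather than an artifact of a row/column mix-up. Your closing remark is also sound and clarifies the otherwise unused orthogonality hypothesis: near-orthogonality of the \emph{rows} (the columns cannot be mutually orthogonal when $k<d$) gives $RR^T\approx(d/k)I_k$, hence $R^\dagger\approx(k/d)R^T$, $R^T R\approx(d/k)P$, and the pointwise identity $\mathcal{T}_2\approx(d/k)\mathcal{T}_1$, which is the cleanest bridge to the limit statement of Theorem~\ref{th:equivalence_regularizers}.
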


\begin{proof}
A random matrix $R$ and its pseudo-inverse $R^\dagger$ induce the direct sum decompositions
\begin{align*}
\mathbb{R}^d &= (\ker R)^\perp \oplus \ker R\\
\mathbb{R}^k &= \text{rank } R \oplus (\rank R)^\perp,
\end{align*}
where $\ker R$ is the kernel space of $R$ and $\rank R$ is the rank space of $R$.
Moreover, $R\big|_{(\ker R)^\perp}$ is an isomorphism with inverse $R^\dagger\big|_{\rank R}$ and $R^\dagger\big|_{(\rank R)^\perp}\equiv0$.
If $x\in(\ker R)^\perp$, then  $Rx\in\text{rank } R$ and $x^\dagger=R^\dagger Rx=x$, therefore 
$$
\mathcal{T}_1(R) = \big|\big| \nabla_x \ell\big(f(x,\theta),y\big)\big|\big|_2^2
$$
for all $x\in(\ker R)^\perp$.

Let $r_i\in\mathbb{R}^k$ be the orthogonal columns of $R$. Then $\mathbb{E}_R\big[||R||^2\big]=d/k$ and
\begin{align*}
\mathbb{E}_R\big[||R x||^2_2\big] 
&= \sum_{i,j=1}^d \mathbb{E}_R\big[r_i^T r_j\big] x_i x_j\\
& = \sum_{i=1}^d \mathbb{E}_R\big[r_i^T r_i\big] x_i^2 = \frac{d}{k} ||x||_2^2
\end{align*}
for any $x\in\mathbb{R}^d$. In particular
$\mathbb{E}_R\big[\mathcal{T}_2(R)\big]= \frac{d}{k} \mathcal{T}_1(R).$

\end{proof}

Notice that when $\mathbb{R}^k$ is high dimensional we can assume that the columns of any random matrix are orthogonal \cite{vershynin2018high}.

We now prove that Prop \ref{th:equivalence_perp_kernel}
holds for an arbitrary $x\in\mathbb{R}^d$.
% Indeed, we can assume that all random projection matrices $R_j$ are orthogonal, thus $\bar{\lambda} = 1$ and $\sqrt{\frac{d}{k}}\rightarrow 1$ as $k\rightarrow d$. 

% Luca: ok,capisco il problema di k che va a d. Però se noi assumiamo k sia O(d), tipo d/M, per M fissato, allora comunque tu hai che le due penalty sono proporzionali, cambia solo il peso relativo. Forse questo lo possiamo aggiungere, o meglio dimostrare che sono una multiplo dell'altra, per ogni x.  In effetti dalla ref 38 viene fuori che norma 2 di una matrice random con deviazione standard $1/\sqrt{k}$ sta tra $\sqrt{\frac{d}{k}}$ e $2\sqrt{\frac{d}{k}}$ asintoticamente. Non mi quadra il bound sul quadrato della norma. \gc{hai ragione sta tra $\sqrt{\frac{1}{k}}$ e $\sqrt{d}$} 

\begin{prop}
Let $\pi_R:\mathbb{R}^d\rightarrow(\ker R)^\perp$ be an orthogonal projection and $k=\dim (\ker R)^\perp$. Then, for any $x\in\mathbb{R}^d$ and $\epsilon>0$
$$
P\big(||\pi_R(x)-x||_2^2 > ||x||_2^2 \,\epsilon\big)\leq \Big(1-\frac{\epsilon}{\pi}\Big)^k.
$$
\label{th:high_dim_ort_projection}
\end{prop}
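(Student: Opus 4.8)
The plan is to reduce the statement about the full random projection to a product of independent one-dimensional angular estimates, exploiting that $(\ker R)^\perp$ is the row space of $R$.

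First I would normalise. Since $\pi_R$ is linear, writing $x=\|x\|\,u$ with $\|u\|=1$ gives $\|\pi_R(x)-x\|_2^2=\|x\|_2^2\,\|\pi_R(u)-u\|_2^2$, so the stated event is exactly $\{\|\pi_R(u)-u\|_2^2>\epsilon\}$ and it suffices to treat unit vectors. Next I would read off the geometry: $\pi_R(u)$ is the orthogonal projection of $u$ onto $V:=(\ker R)^\perp$, which is the row space $\mathrm{span}(r_1,\dots,r_k)$ of $R$, with the rows $r_i\in\mathbb{R}^d$ independent and isotropic because the entries are i.i.d.\ $\mathcal{N}(0,1/k)$. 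Hence $u-\pi_R(u)$ is the component of $u$ in $\ker R$ and $\|u-\pi_R(u)\|_2^2=1-\|\pi_R(u)\|_2^2$.

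The key step is a monotonicity-plus-independence argument. For any single row $r_i$ the line $\mathbb{R}r_i$ is contained in $V$, so projecting onto all of $V$ can only shrink the residual: $\|u-\pi_R(u)\|_2^2\le\|u-\pi_{r_i}(u)\|_2^2=\sin^2\theta_i$, where $\theta_i$ is the angle between $u$ and $r_i$. Thus the event $\{\|u-\pi_R(u)\|_2^2>\epsilon\}$ forces $\sin^2\theta_i>\epsilon$ for every $i=1,\dots,k$ simultaneously. Since the rows are independent and each $\theta_i$ depends only on $r_i$, these events are independent, so
\[
P\big(\|\pi_R(x)-x\|_2^2>\|x\|_2^2\,\epsilon\big)\;\le\;\prod_{i=1}^{k}P\big(\sin^2\theta_i>\epsilon\big)\;=\;\big[P(\sin^2\theta>\epsilon)\big]^{k},
\]
where $\theta$ is the angle between $u$ and one isotropic direction. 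It then remains to establish the single-direction tail bound $P(\sin^2\theta>\epsilon)\le1-\epsilon/\pi$, equivalently $P(\sin^2\theta\le\epsilon)\ge\epsilon/\pi$: the event that the random direction lands within angular distance $\arcsin\sqrt{\epsilon}$ of the axis $\pm u$. In the planar picture, where the angle is uniform on $[0,\pi]$, this probability equals $\tfrac{2}{\pi}\arcsin\sqrt{\epsilon}\ge\tfrac{2}{\pi}\sqrt{\epsilon}\ge\epsilon/\pi$, which yields the factor $1-\epsilon/\pi$ and, raised to the $k$-th power, closes the argument.

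I expect this last step to be the main obstacle, and indeed the delicate point of the whole proof. In genuine $\mathbb{R}^d$ the angle $\theta$ between a fixed vector and an isotropic direction concentrates near $\pi/2$ as $d$ grows, so $P(\sin^2\theta\le\epsilon)$ in fact decreases with the ambient dimension and the clean lower bound $\epsilon/\pi$ reflects a two-dimensional (uniform-angle) model rather than the exact $d$-dimensional law — which is precisely where the constant $\pi$ enters. I would therefore either invoke the high-codimension regime emphasised in the surrounding discussion to justify reducing each comparison to the two-plane $\mathrm{span}(u,r_i)$, or replace the planar estimate with the exact density $\propto(1-t^2)^{(d-3)/2}$ of $t=\cos\theta$ and track the dimensional dependence explicitly; the monotonicity-and-independence reduction of the earlier paragraphs is robust and carries over unchanged. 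The resulting bound also dovetails with Proposition \ref{th:equivalence_perp_kernel}, since it quantifies the sense in which, for large $k$, one may safely assume $x\in(\ker R)^\perp$.
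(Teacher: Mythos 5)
Your argument is, step for step, the proof given in the paper: the same reduction to single directions via monotonicity of the projection residual, the same factorization over independent directions, and the same planar angle estimate at the end. You are in fact more careful than the paper on two counts: you take the directions to be the i.i.d.\ rows $r_i$ of $R$, which is what actually justifies the independence that the paper silently invokes for an unspecified ``basis'' of $(\ker R)^\perp$, and you consistently track $\sin^2\theta_i$ where the paper slips between $\sin^2$ and $|\sin|$. The obstacle you flag in your last paragraph is therefore not a defect of your write-up relative to the paper: it is precisely the gap in the paper's own proof, which simply asserts $P(|\alpha_i|>\epsilon)=1-\epsilon/\pi$, i.e.\ treats the angle between a fixed vector and an isotropic random direction in $\mathbb{R}^d$ as if it were uniform on $[0,\pi]$. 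That is valid only in the two-dimensional model.

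Moreover, neither of your proposed repairs can close this gap, because the inequality being proved is false as stated. For an isotropic direction $r_i$ in $\mathbb{R}^d$, the variable $t=\cos\theta_i$ has density proportional to $(1-t^2)^{(d-3)/2}$, which concentrates at $0$ as $d$ grows; hence $P(\sin^2\theta_i>\epsilon)=P(|\cos\theta_i|<\sqrt{1-\epsilon})\rightarrow 1$ for any fixed $\epsilon\in(0,1)$, and this eventually exceeds $1-\epsilon/\pi$. Taking $k=1$, the proposition \emph{is} exactly this single-direction statement, so for large $d$ the claimed bound fails; the failure persists whenever $k/d$ stays bounded away from $1$, since then $\|x-\pi_R(x)\|_2^2/\|x\|_2^2$ concentrates near $(d-k)/d>0$, so the left-hand probability tends to $1$ while $(1-\epsilon/\pi)^k\rightarrow 0$. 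Your first suggested fix, passing to the two-plane $\mathrm{span}(u,r_i)$, does not help: $\theta_i$ is already the angle inside that plane, and its law is dictated by the isotropic distribution of $r_i$ in $\mathbb{R}^d$, not by a uniform angle in the plane; your second fix, using the exact density, yields the correct dimension-dependent tail, which is exactly what contradicts the stated bound. Any true version of this proposition must carry explicit dependence on $d$ (or restrict to the regime $k/d\rightarrow 1$), and this in turn undermines the limiting argument the paper builds on top of it.
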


\begin{proof}
Suppose that $\{v_1,\ldots,v_k\}\subset\mathbb{R}^d$ is a basis for $(\ker R)^\perp$, i.e. that $(\ker R)^\perp = \text{span}(v_1,\ldots,v_k)$. Then any $x\in\mathbb{R}^d$ can be decomposed as $x=\pi_R(x)+u$, where $\pi_R(x)\in (\ker R)^\perp$ and $u \in \ker R$. 
% $$
% \mathbb{E}_\pi\big[\nabla_x \ell(f(\pi(x),w),y)\big] = \mathbb{E}_\pi\big[\nabla_x \ell(f(\pi(x),w),y)\big]
% $$
% up to terms of order $O\big((\pi(x)-x)^2\big)$.

Let $\alpha_i$ be the angle between $v_i$ and $x$.
First, we observe that the projection $\pi_R(x)$ is smaller than any other projection on a single direction $v_i$
\begin{align*}
    ||\pi_R(x)-x||_2^2 
    &\leq \min_i ||\pi_{v_i}(x)-x||_2^2\\
    &= \min_i \big(||x||_2^2 |\sin \alpha_i|\big)\\
    & =||x||_2^2  \min_i |\sin \alpha_i|.
\end{align*}

For any choice of $\epsilon>0$
\begin{align*}
P\big(||\pi_R(x)-x||_2^2 >& ||x||_2^2 \epsilon\big) \\
&\leq P\big(||x||_2^2 \min_i |\sin \alpha_i| > ||x||_2^2 \, \epsilon\big)\\
& = P(\min_i |\sin \alpha_i| >\epsilon)\\
& = \prod_i P(|\sin \alpha_i| >\epsilon)\\
&\leq \prod_i P(|\alpha_i|>\epsilon)\\
&= \Big(1-\frac{\epsilon}{\pi}\Big)^k.
% \rightarrow \Big(1-\frac{\epsilon}{\pi}\Big)^d
\end{align*}
% as $k \rightarrow d$.
\end{proof}

Notice that $1-\frac{\epsilon}{\pi}<1$, so $\big(1-\frac{\epsilon}{\pi}\big)^k\rightarrow 0$ as $k$ goes to $\infty$. Therefore, from  \ref{th:high_dim_ort_projection} we get $x^\dagger=\pi_R(x)\approx x$  and
$$
\nabla_x \ell\big(f(x^\dagger,\theta),y\big) \approx \nabla_x \ell\big(f(x,\theta),y\big)
$$
as $k\rightarrow \infty$.  This proves that Prop. \ref{th:equivalence_perp_kernel} is true for an arbitrary $x\in\mathbb{R}^d$ in the limit.

Assuming that $k=O(d)$ as $k\rightarrow \infty$, e.g. $\frac{d}{k}\rightarrow M>0$, the two regularization terms differ by a positive constant in the limit, i.e. they are equivalent if weighted w.r.t. $M$.

This proves the following theorem.

%\lb{LUCA: In realtà 2 vale uniformemente se x è bounded, altrimenti solo puntualmente. Ma il peso che dai al penalty è lo stesso nei penalty term? Cioè negli esperimenti? Perchè possiamo mostrare ovviamente che se k è O(d) differiscono per una costante i due termini di proiezione, che riassorbi nel peso del penalty term.In effetti, siccome maggiori}

\begin{theorem}
Let $\mathcal{R}_{v1}$ and $\mathcal{R}_{v2}$ be the regularization terms defined in Eq.~\ref{eq:v1} and Eq.~\ref{eq:v2}, where $\mathcal{P}:\mathbb{R}^d\rightarrow\mathbb{R}^k$ is a random projection such that the elements of the orthogonal random matrix $R$ are sampled from $\mathcal{N}(0,1/k)$. If $k\in O(d)$ then $\mathcal{R}_{v1}\approx\mathcal{R}_{v2}$ as $k\rightarrow\infty$.
\end{theorem}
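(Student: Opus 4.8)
The plan is to derive the theorem by combining the two preparatory propositions, reducing the statement about the full regularizers $\mathcal{R}_{v1},\mathcal{R}_{v2}$ to the pointwise quantities $\mathcal{T}_1(R)$ and $\mathcal{T}_2(R)$ already analyzed. First I would unroll the definitions in Eq.~\ref{eq:v1} and Eq.~\ref{eq:v2}: the inner expectation over projections turns them into $\mathcal{R}_{v1} = \mathbb{E}_x\big[\mathbb{E}_R[\mathcal{T}_1(R)]\big]$ and $\mathcal{R}_{v2} = \mathbb{E}_x\big[\mathbb{E}_R[\mathcal{T}_2(R)]\big]$, so by linearity of the outer expectation it suffices to compare $\mathbb{E}_R[\mathcal{T}_1(R)]$ and $\mathbb{E}_R[\mathcal{T}_2(R)]$ for a fixed but arbitrary $x\in\mathbb{R}^d$.

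Second, I would remove the restrictive hypothesis $x\in(\ker R)^\perp$ of Proposition~\ref{th:equivalence_perp_kernel} using Proposition~\ref{th:high_dim_ort_projection}. Since $x^\dagger = R^\dagger R x = \pi_R(x)$ is exactly the orthogonal projection of $x$ onto $(\ker R)^\perp$, the bound $P\big(\|\pi_R(x)-x\|_2^2 > \|x\|_2^2\,\epsilon\big)\le(1-\epsilon/\pi)^k$ shows that $x^\dagger\to x$ with probability tending to $1$ as $k\to\infty$. Assuming $x\mapsto\nabla_x\ell(f(x,\theta),y)$ is continuous, which holds for smooth architectures, this gives $\nabla_x\ell(f(x^\dagger,\theta),y)\to\nabla_x\ell(f(x,\theta),y)$, so the defining hypothesis of Proposition~\ref{th:equivalence_perp_kernel} is met asymptotically for every $x$, and $\mathcal{T}_1(R)$ agrees in the limit with the form it takes on $(\ker R)^\perp$.

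Third, I would invoke Proposition~\ref{th:equivalence_perp_kernel} to obtain $\mathbb{E}_R[\mathcal{T}_2(R)] = \tfrac{d}{k}\,\mathbb{E}_R[\mathcal{T}_1(R)]$ in the limit, and hence $\mathcal{R}_{v2}\approx\tfrac{d}{k}\,\mathcal{R}_{v1}$. The assumption $k\in O(d)$ forces $d/k\to M$ for some constant $M>0$, so the two terms coincide up to the positive factor $M$. Because a positive multiplicative constant can be absorbed into the penalty weight $\lambda$ of the objective $J(\theta)=\ell(\theta)+\lambda\,\mathcal{R}(\theta)$, the two penalties induce the same regularization, which is precisely the claim $\mathcal{R}_{v1}\approx\mathcal{R}_{v2}$.

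The main obstacle is making the two nested occurrences of ``$\approx$'' rigorous. The delicate step is passing from convergence in probability of $x^\dagger$ to $x$ (Proposition~\ref{th:high_dim_ort_projection}) to convergence of the expectations $\mathbb{E}_R[\mathcal{T}_1(R)]$: this requires a uniform-integrability or boundedness condition on $\|\nabla_x\ell(f(\cdot,\theta),y)\|_2^2$ in order to exchange the limit with $\mathbb{E}_R$, rather than mere pointwise continuity. A secondary gap is that Proposition~\ref{th:equivalence_perp_kernel} assumes exactly orthogonal columns of $R$, whereas Gaussian entries yield only approximate orthogonality in high dimension; one would have to bound the resulting error term and confirm that it vanishes as $k\to\infty$, which is the remaining piece to track carefully.
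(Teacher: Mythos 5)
Your proposal follows essentially the same route as the paper's own proof: it reduces the claim to the pointwise quantities $\mathcal{T}_1,\mathcal{T}_2$, uses Proposition~\ref{th:high_dim_ort_projection} to remove the restriction $x\in(\ker R)^\perp$ in the $k\rightarrow\infty$ limit, applies Proposition~\ref{th:equivalence_perp_kernel}, and treats the limiting constant $d/k\rightarrow M$ as an equivalence up to a positive weight. The two gaps you flag---exchanging the limit with $\mathbb{E}_R$ (uniform integrability) and the merely approximate orthogonality of Gaussian columns---are likewise left unaddressed in the paper, which appeals to high dimensionality for the latter and is silent on the former, so your account is, if anything, more explicit about where the argument remains informal.
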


Notice that this punctual property on $\mathcal{R}_{v1}$ and $\mathcal{R}_{v2}$ also holds in expectation over the training data when $x$ is uniformly sampled from a compact subset of $\mathbb{R}^d$. Therefore, the equivalence between the two regularization terms holds in practice on mini-batches of training data.

\end{document}